\newif\ifarxiv
\newcommand{\wcovdev}{\operatorname{wCD}}
\let\over=\@@over \let\overwithdelims=\@@overwithdelims
\let\atop=\@@atop \let\atopwithdelims=\@@atopwithdelims
\let\above=\@@above \let\abovewithdelims=\@@abovewithdelims
	\tikzstyle{int}=[draw, fill=blue!20, minimum size=2em]
	\tikzstyle{dot}=[circle, draw, fill=blue!20, minimum size=2em]
	\tikzstyle{init} = [pin edge={to-,thin,black}]
     \def\EE{\mathbb{E}}
     \def\PP{\mathbb{P}}
     \def\RR{\mathbb{R}}
\def\11{\mathbbm{1}}
\def\calC{{\cal  C}} 
\def\calD{{\cal  D}}
\def\calG{{\cal  G}} 
\def\calH{{\cal  H}}
\def\calR{{\cal  R}}
\def\calW{{\cal  W}} 
\def\calX{{\cal  X}} 
\def\calY{{\cal  Y}} 
\def\calZ{{\cal  Z}}
\newcommand{\bfsym}[1]{\ensuremath{\boldsymbol{#1}}}
\def\bsigma{\bfsym \sigma}
\def\hbeta{\hat{\beta}}
\def\$#1\${\begin{align*}#1\end{align*}}
\newcommand{\eqref}[1]{~(\ref{#1})}
\def\mod{\mathop{\rm mod}}
\def\exp{\mathop{\rm exp}}
\def\EE{\Expect}
\def\PP{\mathbb{P}}
\newcommand{\abs}[1]{\left| #1 \right|}
\def\bbordermatrix#1{\begingroup \m@th
	\@tempdima 4.75\p@
	\setbox\z@\vbox{%
		\def\cr{\crcr\noalign{\kern2\p@\global\let\cr\endline}}%
		\ialign{$##$\hfil\kern2\p@\kern\@tempdima&\thinspace\hfil$##$\hfil
			&&\quad\hfil$##$\hfil\crcr
			\omit\strut\hfil\crcr\noalign{\kern-\baselineskip}%
			#1\crcr\omit\strut\cr}}%
	\setbox\tw@\vbox{\unvcopy\z@\global\setbox\@ne\lastbox}%
	\setbox\tw@\hbox{\unhbox\@ne\unskip\global\setbox\@ne\lastbox}%
	\setbox\tw@\hbox{$\kern\wd\@ne\kern-\@tempdima\left[\kern-\wd\@ne
		\global\setbox\@ne\vbox{\box\@ne\kern2\p@}%
		\vcenter{\kern-\ht\@ne\unvbox\z@\kern-\baselineskip}\,\right]$}%
	\null\;\vbox{\kern\ht\@ne\box\tw@}\endgroup}
\newcommand{\Expect}{\mathbb{E}}
\newcommand{\pth}[1]{\left( #1 \right)}
\newcommand{\qth}[1]{\left[ #1 \right]}
\newcommand{\sth}[1]{\left\{ #1 \right\}}
\definecolor{myblue}{rgb}{.8, .8, 1}
\definecolor{mathblue}{rgb}{0.2472, 0.24, 0.6} 
\definecolor{mathred}{rgb}{0.6, 0.24, 0.442893}
\definecolor{mathyellow}{rgb}{0.6, 0.547014, 0.24}
\newcommand{\tS}{{\tilde{S}}}
\def\unifto{\mathop{{\mskip 3mu plus 2mu minus 1mu%
			\setbox0=\hbox{$\mathchar"3221$}%
			\raise.6ex\copy0\kern-\wd0%
			\lower0.5ex\hbox{$\mathchar"3221$}}\mskip 3mu plus 2mu minus 1mu}}
\def\simleq{{{\mskip 3mu plus 2mu minus 1mu%
			\setbox0=\hbox{$\mathchar"013C$}%
			\raise.2ex\copy0\kern-\wd0%
			\lower0.9ex\hbox{$\mathchar"0218$}}\mskip 3mu plus 2mu minus 1mu}}
\def\simleq{\lesssim}
\def\simgeq{{{\mskip 3mu plus 2mu minus 1mu%
			\setbox0=\hbox{$\mathchar"013E$}%
			\raise.2ex\copy0\kern-\wd0%
			\lower0.9ex\hbox{$\mathchar"0218$}}\mskip 3mu plus 2mu minus 1mu}}
\def\simgeq{\gtrsim}
	\newif\ifmapx
	\edef\jobnametmp{\expandafter\string\csname ic_apx\endcsname}
	\edef\jobnameapx{\expandafter\mkillslash\jobnametmp}
	\edef\jobnameexpand{\jobname}
	\renewcommand{\hat}{\widehat}
	\renewcommand{\tilde}{\widetilde}
	\newtheorem{theorem}{Theorem}
    \numberwithin{theorem}{section}
	\newtheorem{lemma}[theorem]{Lemma}
	\newtheorem{corollary}[theorem]{Corollary}
	\theoremstyle{definition}
    \newtheorem{definition}[theorem]{Definition}
    \theoremstyle{remark}
    \theoremstyle{definition}
\newenvironment{customthm}[1]
  {\innercustomthm}
  {\endinnercustomthm}
\newenvironment{customcor}[1]
  {\innercustomcor}
  {\endinnercustomcor}
\title{Kandinsky Conformal Prediction: \\
Beyond Class- and Covariate-Conditional Coverage\thanks{All authors are listed in alphabetical order.}}
\author{Konstantina Bairaktari\thanks{Khoury College of Computer Sciences, Northeastern University. \texttt{bairaktari.k@northeastern.edu}. Supported by NSF awards CNS-2232692 and CCF-2311649.}
        \and
        Jiayun Wu\thanks{School of Computer Science,
        Carnegie Mellon University.
        \texttt{jiayunw@andrew.cmu.edu}} 
        \and
        Zhiwei Steven Wu\thanks{School of Computer Science,
        Carnegie Mellon University.
        \texttt{zhiweiw@cs.cmu.edu}} 
}
\date{}
\begin{document}

\maketitle

\begin{abstract}
Conformal prediction is a powerful distribution-free framework for constructing prediction sets with coverage guarantees. Classical methods, such as split conformal prediction, provide \emph{marginal coverage}, ensuring that the prediction set contains the label of a random test point with a target probability. However, these guarantees may not hold uniformly across different subpopulations, leading to disparities in coverage. Prior work has explored coverage guarantees conditioned on events related to the covariates and label of the test point. 
We present \emph{Kandinsky conformal prediction}, a framework that significantly expands the scope of conditional coverage guarantees. In contrast to Mondrian conformal prediction, which restricts its coverage guarantees to disjoint groups—reminiscent of the rigid, structured grids of Piet Mondrian’s art—our framework flexibly handles overlapping and fractional group memberships defined jointly on covariates and labels, reflecting the layered, intersecting forms in Wassily Kandinsky’s compositions. Our algorithm unifies and extends existing methods, encompassing covariate-based group conditional, class conditional, and Mondrian conformal prediction as special cases, while achieving a minimax-optimal high-probability conditional coverage bound.  Finally, we demonstrate the practicality of our approach through empirical evaluation on real-world datasets.

\end{abstract}

\section{Introduction}

\ifarxiv
\begin{figure}[h!]
    \centering
    \begin{minipage}{0.45\textwidth}
        \centering
        \includegraphics[width=\linewidth]{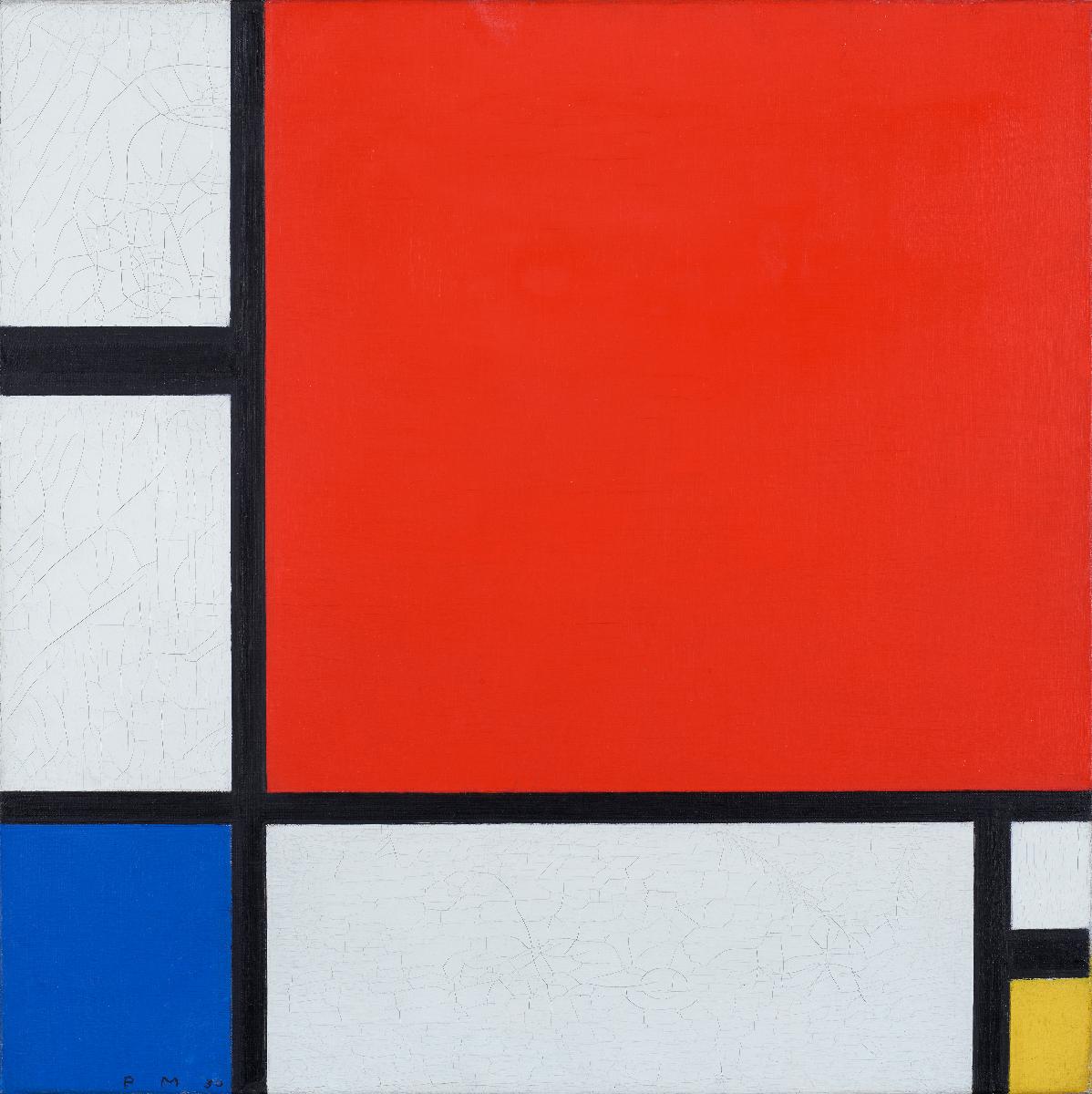}
        \caption{Composition with Red, Blue and Yellow, Piet Mondrian, 1930, Kunsthaus Zurich}
        \label{fig:fig1}
    \end{minipage}%
    \hfill
    \begin{minipage}{0.45\textwidth}
        \centering
        \includegraphics[width=0.97\linewidth]{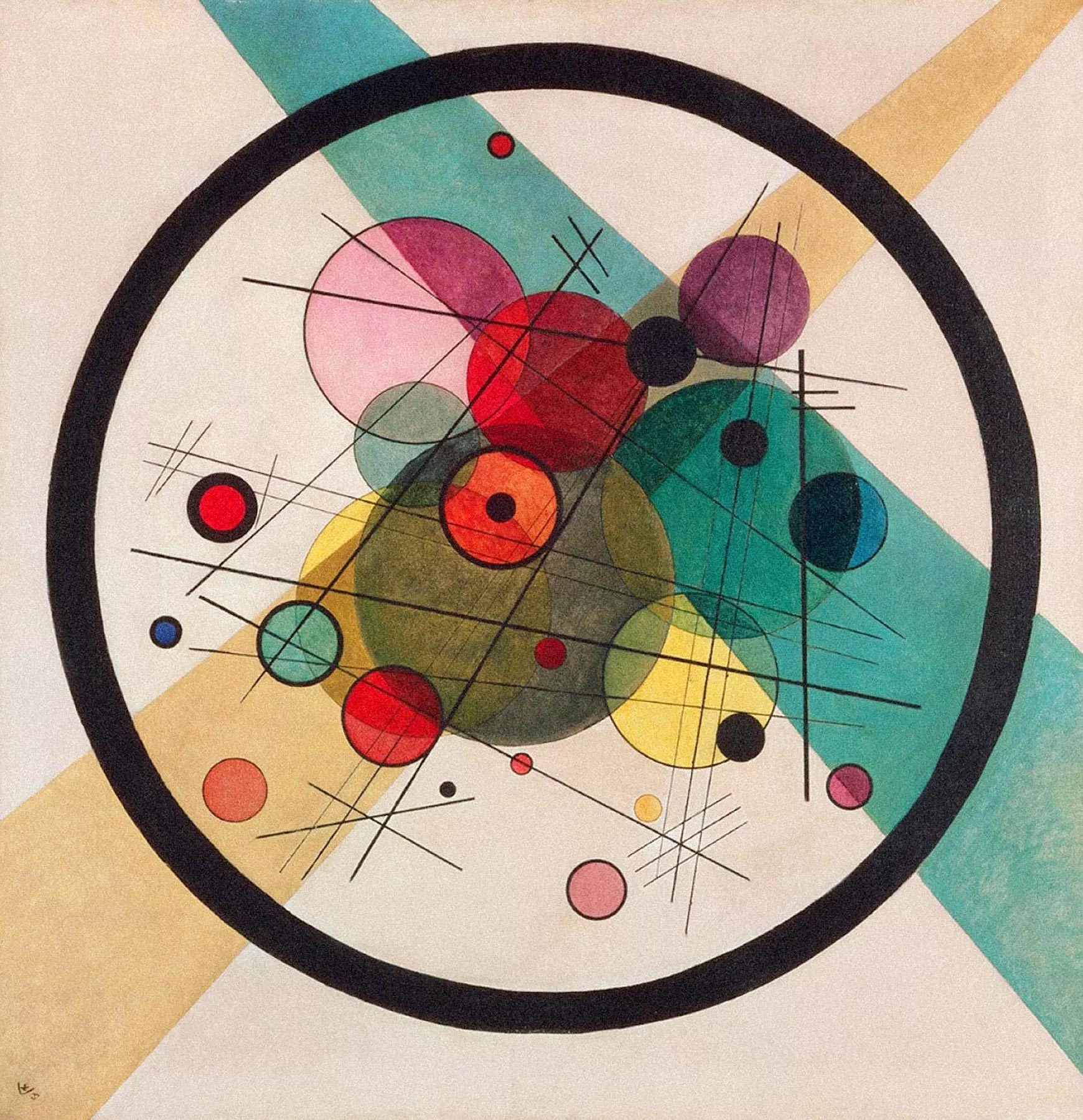}
        \caption{Circles in a Circle, Wassily Kandinsky, 1923, Philadelphia Museum of Art}
        \label{fig:fig2}
    \end{minipage}
\end{figure}
\else
\begin{figure*}[h!]
    \centering
    \begin{minipage}{0.45\textwidth}
        \centering
        \includegraphics[width=0.7\linewidth]{figures/Mondrian.jpg}
        \caption{Composition with Red, Blue and Yellow, Piet Mondrian, 1930, Kunsthaus Zurich}
        \label{fig:fig1}
    \end{minipage}%
    \hfill
    \begin{minipage}{0.45\textwidth}
        \centering
        \includegraphics[width=0.67\linewidth]{figures/Kandinsky.jpg}
        \caption{Circles in a Circle, Wassily Kandinsky, 1923, Philadelphia Museum of Art}
        \label{fig:fig2}
    \end{minipage}
\end{figure*}
\fi

Conformal prediction \citep{AlgorithmicLearning2005, LW14, papadopoulos2002inductive} is a framework for constructing prediction sets with formal coverage guarantees. Given a predictor $f: \cal X \to \cal Y$ and $n$ samples $\{(X_i, Y_i)\}_{i \in [n]}$ from a distribution $\cal D$, the goal is to build a prediction set function $\calC$ that, for a given covariate vector $X_{n+1}$, outputs a subset of $\cal Y$ guaranteed to include the true label $Y$ with a target probability. Ideally, the size of $\calC(X_{n+1})$ reflects the uncertainty in $f(X_{n+1})$.


A common coverage guarantee, termed as \emph{training conditional validity} \cite{V12}, requires that for any $f$ the training algorithm returns a (potentially randomized) prediction set function $\calC$ such that, with probability $1-\delta$ over the draws of the calibration dataset $ \{(X_i, Y_i)\}_{i \in [n]}$, the following holds:
\begin{equation}
\label{eq:marginal_guarantee}
\mathbb P \left[ Y_{n+1} \in \calC(X_{n+1}) \mid \{(X_i, Y_i)\}_{i \in [n]} \right] \approx 1-\alpha,
\end{equation}
for some $\alpha$ and $\delta$ in $(0,1)$.
This guarantee ensures coverage \emph{on average} over random test samples $(X_{n+1}, Y_{n+1})$ drawn from $\cal D$.
 A weaker guarantee, called the \emph{marginal coverage guarantee}, requires that $\mathbb \PP \left[ Y_{n+1} \in \calC(X_{n+1}) \right] \approx 1-\alpha$, where the probability is taken over all $n+1$ points~\citep{AlgorithmicLearning2005, GCC2023}.
However, a limitation of such average coverage guarantees is that the prediction sets might undercover certain values of $(X_{n+1}, Y_{n+1})$ while overcovering others.  A potential solution is to require $(1-\alpha)$-coverage conditioning on the value of $(X_{n+1}, Y_{n+1})$. Unfortunately, we generally cannot obtain non-trivial prediction sets that satisfy such pointwise coverage~\citep{V12, LW14, BCRT21}. 

However, various types of conditional coverage guarantees can still be achieved. For instance, Mondrian conformal prediction \citep{VLNG03} ensures the $(1-\alpha)$-coverage guarantees over a disjoint set of subgroups $\cal G$ defined over both the covariate $X_{n+1}$ and the label $Y_{n+1}$. The method is named after Piet Mondrian, as its partitioning the space $\cal X \times \cal Y$ into a finite set of non-overlapping groups $\calG$ mirrors Mondrian’s iconic compositions of disjoint rectangles (see \Cref{fig:fig1}). 
Class-conditional conformal prediction (for classification) \citep{LBLJ15, DABJT23} can be viewed as a special case of Mondrian conformal prediction, where each label defines a distinct group.

Yet, the disjoint group assumption in Mondrian conformal prediction has notable limitations, as real-world subpopulations of interest often overlap. This is particularly relevant in algorithmic fairness, where protected subgroups--typically defined by combinations of demographic attributes such as race and gender---naturally intersect \citep{kearns18a,HKRR18}. An alternative line of research addresses this challenge by developing conditional coverage guarantees for overlapping groups \citep{JNRR2023, GCC2023}. However, both of their results restrict the group functions to depend solely on the covariate vector $X_{n+1}$.

In this paper, we expand the scope of conditional coverage guarantees by achieving the best of both worlds from these two lines of prior work. Building on \citet{GCC2023, JNRR2023}, we provide coverage guarantees for overlapping groups while, in the spirit of Mondrian conformal prediction \citep{VLNG03}, allowing grouping functions to depend jointly on both the covariates $X_{n+1}$  and the label  $Y_{n+1}$. More generally, our guarantees hold for fractional groups, where membership is defined as a probabilistic function over  $X_{n+1}$ and  $Y_{n+1}$. This added flexibility enables the modeling of subgroups based on protected attributes that are not explicitly observed but can be inferred through covariates and labels~\citep{RBSC20}. We name our method \emph{Kandinsky Conformal Prediction}\footnote{We note that the name Kandinsky Conformal Prediction coincides with that of an unrelated conformal prediction method for image segmentation~\citep{brunekreef2024}, which balances pixel-level and image-level calibration. Our technique instead focuses on conditional guarantees of general conformal prediction.}, drawing inspiration from Wassily Kandinsky’s compositions of overlapping geometric forms (see \Cref{fig:fig2}), as we provide conditional coverage guarantees for groups that are flexible, overlapping, and probabilistic.

Similar to \citet{GCC2023}, our conditional guarantee can be leveraged to obtain valid coverage guarantees under distribution shift, provided that the density ratio between the test and calibration distributions is captured by one of the group functions we consider. While \citet{GCC2023} is limited to handling covariate shift—since their group functions depend solely on  $X$ —our approach extends to a broader class of distribution shifts affecting both  $X$  and $Y$.

Our algorithm is computationally efficient, which relies on solving a linear quantile regression over the vector space spanned by a set of group functions $\calG$. Similar to \citet{JNRR2023}, it preserves the key advantage of requiring only a single quantile regression model that can be applied to all test examples, rather than needing to solve a separate regression for each instance as in \citet{GCC2023}. We show that our method achieves a high-probability conditional coverage error rate of 
$\mathcal O(n_{\calG}^{-1/2})$, where $n_{\calG}$ is the number of samples per group on average, matching the minimax-optimal rate \citep{ACDR24}.
In the special case where group functions only depend on the covariates,  our bound significantly improves the $\tilde{ \mathcal  O}(n_{\calG}^{-1/4})$ coverage error bound in \citet{RO22, JNRR2023} in its dependence on $n_{\calG}$, and sharpens the dependence on $|\calG|$  when compared to the $\mathcal O(n_{\calG}^{-1/2}(1 + |\calG|^{1/2}n_{\calG}^{-1/2}))$ bound in \citet{ACDR24}.


To complement our main algorithm, which ensures training-conditional validity, we also provide a \emph{test-time inference} algorithm that obtains expected marginal coverage that also takes randomness over the draws on calibration data. By extending \citet{GCC2023}’s approach to accommodate group functions defined over both covariates and labels, we obtain an expected coverage error bound of $\mathcal O(n_{\calG}^{-1})$. However, this result comes at the cost of computational efficiency, as the algorithm needs to perform test-time inference that solves an instance of quantile regression for every new test example.


Our contribution can be summarized as follows.
\begin{enumerate}[leftmargin=*]

   \item  We study the most general formulation of group-conditional coverage guarantees in conformal prediction, handling overlapping groups and fractional group memberships that are defined jointly by covariates and labels. This conditional guarantee can be used to obtain valid coverage under general subpopulation distribution shifts.
    
\item We propose Kandinsky Conformal Prediction, a method that is both computationally efficient and statistically optimal. Computationally, it requires solving a single quantile regression over the vector space spanned by group functions. Statistically, it attains the minimax-optimal group-conditional coverage error with finite samples.

   \item We evaluate our algorithm on real-world tasks, including income prediction, toxic comment detection, and multiple-choice question answering. Our results show that Kandinsky Conformal Prediction consistently achieves the best-calibrated conditional coverage under diverse model architectures, including boosting trees, neural networks, and LLMs,  while scaling effectively with the number of groups.

\end{enumerate}

\subsection{Related Work}

The literature on conformal prediction is vast, covering a range of coverage guarantees, methods, and applications. Several textbooks and surveys provide in-depth discussions \citep{AlgorithmicLearning2005, SV08, BHV14, ABB24}. A classical method is split conformal prediction \citep{PPVG02, LGRTW18}, which ensures marginal coverage. A major advantage of conformal prediction is that the coverage guarantees hold regardless of the choice of non-conformity score functions. Meanwhile, another line of research explores specialized non-conformity scores for specific tasks, such as regression \citep{LeiRW2013, RPC19, IzbickiSS20} and classification \citep{SLW19, ABMJ20, RSC20}.

Our work is closely related to the growing body of work on conformal prediction that establishes coverage guarantees conditioned on events involving the test example $(X_{n+1}, Y_{n+1})$. A fundamental class of such events corresponds to disjoint subgroups. For instance, Mondrian conformal prediction \citep{VLNG03} partitions the space  $\mathcal{X} \times \mathcal{Y}$ into disjoint regions, encompassing class-conditional \citep{LBLJ15, DABJT23} and sensitive-covariate-conditional \citep{RBSC20} conformal prediction as special cases. 

We introduce Kandinsky Conformal Prediction as an extension of the Mondrian approach, providing coverage guarantees for overlapping group structures. Prior work on overlapping group-conditional coverage primarily considers groups defined solely by covariates  $X_{n+1}$. \citet{BCRT21} addresses this by assigning the most conservative prediction set to points in multiple groups. Our algorithm builds on the quantile regression techniques of \citet{JNRR2023} and \citet{GCC2023}, which compute per-example thresholds on non-conformity scores. \citet{blot2024} extends \citet{GCC2023}'s technique to include group membership functions depending on both the label and the covariate, and they establish a one-sided marginal coverage bound. We further provide two-sided training conditional  bounds with a different algorithm. Additionally, \citet{ACDR24} establishes a lower bound on group-conditional coverage error rates. Since we generalize to group functions that depend on both covariates and labels, their bound applies to our setting and shows that our error rate is minimax-optimal.

Thematically, our work is related to a long line of work in multi-group fairness \citep{kearns18a,HKRR18, KNRW19}. In particular, our algorithm can be seen as predicting a target quantile conditioned on covariates and labels while satisfying the \emph{multiaccuracy} criterion \citep{KGZ19, RO22}.


Finally, our work advances the study of conformal prediction under distribution shifts between calibration and test data. Existing research largely focuses on specific cases, such as covariate shift \citep{TBCR19, QDT23, YKT24, PNI24} or label shift \citep{PR21}. \citet{plassier2024} considers general shifts on both the covariate and label distributions. However, all of these approaches are designed for a single target distribution. By introducing a more general class of group functions that depend jointly on covariates and labels, we establish coverage guarantees that remain valid under broader distribution shifts over $\cal X \times \cal Y$, including multiple potential target distributions.


\section{Preliminaries}

We consider prediction tasks over $\calX \times \calY$, where $\calX$ represents the covariate domain and $\calY$ represents the label domain. $\calY$ can be either finite for classification tasks or infinite for regression tasks. Unless stated otherwise, we assume that the observed data is drawn from a distribution $\calD$, defined over $\calX \times \calY$.
We consider potentially randomized prediction set functions, denoted by $\calC(X;\varepsilon)$, where the randomness is introduced by a random variable $\varepsilon$ that is independent of $(X,Y)$. We further assume that the randomness of the conformal prediction algorithm depends on $n$ independent random variables $\varepsilon_i$ that are independent of the dataset $\{(X_i,Y_i)\}_{i \in [n]}$.

Let $(1-\alpha)$ be the target coverage level, where $\alpha \in (0,1)$. We can now restate the definition of training conditional validity to account for the randomness of the algorithm and the prediction set function. Specifically, this requires that for any $f$ the training algorithm returns a (potentially randomized) prediction set function $\calC$ such that, with probability $1-\delta$ over the draws of the calibration dataset $ \{(X_i, Y_i)\}_{i \in [n]}$ and the randomness of the conformal prediction algorithm, captured by $\{\varepsilon_i\}_{i\in[n]}$, the following holds:
 \begin{equation}
 \mathbb P \left[ Y_{n+1} \in \calC(X_{n+1}; \varepsilon_{n+1}) \mid \{(X_i, Y_i, \varepsilon_i)\}_{i \in [n]} \right] \approx 1-\alpha,
 \end{equation}
for some  $\delta$ in $(0,1)$.

We use weight functions $w:\cal X\times \cal Y \to \RR$ to model our conditional coverage guarantees. 
For any weight function $w$, we measure the error of a fixed prediction set function $\calC$ by its \emph{weighted coverage deviation}: 
\begin{align*}
    \wcovdev(\calC, \alpha, w) \coloneqq \EE[w(X,Y)(\11\{Y \in \calC(X; \varepsilon)\}-(1-\alpha))],
\end{align*}
where the expectation is taken over the test point $(X,Y)$, drawn from $\calD$, and $\varepsilon$, the internal randomness of $\calC$. 

Many conformal prediction methods require calculating quantiles of a distribution.

\begin{definition}[Quantile]
    Given a distribution $P$ defined on $\RR$, for any $\tau \in [0,1]$ the $\tau$-quantile of the distribution $P$, denoted $q$, is defined as 
    \begin{align*}
        q = \inf\{x \in \RR: \PP_{X \sim P}[ X \leq x] \geq \tau\}.
    \end{align*}
\end{definition}
Our more general framework performs quantile regression by solving a pinball loss minimization problem. By \Cref{lem: quant-reg}, in the special case where we compute the largest value in $\RR$ that minimizes the sum of pinball losses over a set of points, we obtain a $(1-\alpha)$ quantile of the set.

\begin{definition}[Pinball Loss]
    For a given target quantile $1-\alpha$, where $\alpha \in [0,1]$, predicted value $\theta \in \RR$ and score $s \in \RR$, the pinball loss is defined as 
    \begin{align*}
        \ell_\alpha(\theta,s) \coloneqq \begin{cases}
            (1-\alpha)(s-\theta), &\text{ if } s \geq \theta,\\
            \alpha(\theta-s), &\text{ if } s<\theta.
        \end{cases}
    \end{align*}
\end{definition}

\begin{lemma}[\cite{KB78}]
\label{lem: quant-reg}
    Let $\alpha$ be a parameter in $(0,1)$, and let $\{s_i\}_{i \in [n]}$ be a set of $n$ points, where for all $i \in [n]$, $s_i \in \RR$.
    Then, the largest $\theta^* \in \RR$ such that 
    \[
    \theta^* \in \arg\min_{\theta \in \RR} \sum_{i \in [n]} \ell_{\alpha}(\theta, s_i)
    \]
    is a $(1-\alpha)$-quantile of $\{s_i\}_{i \in [n]}$.
\end{lemma}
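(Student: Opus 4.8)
The plan is to exploit the convex, piecewise-linear structure of the objective $L(\theta) := \sum_{i \in [n]} \ell_\alpha(\theta, s_i)$ and to read off its minimizers from its one-sided derivatives. First I would record the structural facts: each map $\theta \mapsto \ell_\alpha(\theta, s_i)$ is convex and piecewise linear with a single kink at $s_i$, so $L$ is convex, piecewise linear, and coercive — as $\theta \to +\infty$ every term eventually has slope $\alpha$, giving total slope $\alpha n > 0$, and as $\theta \to -\infty$ every term has slope $-(1-\alpha)$, so $L \to +\infty$ at both ends since $\alpha \in (0,1)$. Consequently the set of minimizers is a nonempty compact interval, and in particular a largest minimizer $\theta^*$ exists.

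Next I would compute the left and right derivatives of $L$ at an arbitrary point. Differentiating termwise, a score with $s_i > \theta$ contributes slope $-(1-\alpha)$ and one with $s_i < \theta$ contributes $+\alpha$, so
\begin{align*}
D^+ L(\theta) &= \alpha\,\lvert\{i : s_i \le \theta\}\rvert - (1-\alpha)\,\lvert\{i : s_i > \theta\}\rvert,\\
D^- L(\theta) &= \alpha\,\lvert\{i : s_i < \theta\}\rvert - (1-\alpha)\,\lvert\{i : s_i \ge \theta\}\rvert.
\end{align*}
By convexity, $\theta$ minimizes $L$ if and only if $D^- L(\theta) \le 0 \le D^+ L(\theta)$. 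Using $\lvert\{s_i \le \theta\}\rvert + \lvert\{s_i > \theta\}\rvert = n$ (and its strict analogue), these two inequalities simplify to the single chain
\[
\frac{\lvert\{i : s_i < \theta\}\rvert}{n} \;\le\; 1-\alpha \;\le\; \frac{\lvert\{i : s_i \le \theta\}\rvert}{n}.
\]

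I would then observe that this is exactly the condition for $\theta$ to be a $(1-\alpha)$-quantile of the empirical distribution $P_n$ on $\{s_i\}_{i \in [n]}$: the right inequality says $\PP_{X \sim P_n}[X \le \theta] \ge 1-\alpha$ and the left says $\PP_{X \sim P_n}[X < \theta] \le 1-\alpha$, which together characterize the $(1-\alpha)$-quantiles and are consistent with the infimum in the Quantile definition once one uses that the empirical CDF is nondecreasing and right-continuous. Hence the minimizer set of $L$ coincides with the closed set of $(1-\alpha)$-quantiles of $\{s_i\}$; its largest element $\theta^*$ therefore satisfies the displayed chain and is, in particular, a $(1-\alpha)$-quantile, which is the claim.

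The only delicate point is the behavior at the order statistics, especially the boundary case where $(1-\alpha)n$ is an integer: there the slope of $L$ vanishes on an entire segment $[s_{(m)}, s_{(m+1)}]$ between consecutive order statistics, so the minimizer set is a nondegenerate interval rather than a single point, and the largest minimizer $\theta^*$ is its upper endpoint $s_{(m+1)}$. I would handle this by working with the one-sided derivatives (equivalently the subdifferential) instead of assuming differentiability at the $s_i$, and by sorting the scores so that $\lvert\{s_i \le \theta\}\rvert$ and $\lvert\{s_i < \theta\}\rvert$ are explicit step functions of $\theta$; this also transparently accommodates ties among the $s_i$. This case analysis around the kinks is the main thing to get right, while the rest follows directly from the optimality condition for a convex function.
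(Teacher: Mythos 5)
The paper gives no proof of this lemma---it is cited to Koenker and Bassett (1978)---so there is nothing internal to compare against; your argument is the standard one and it is correct: convexity and coercivity of the piecewise-linear objective, the one-sided derivative (subgradient) optimality condition, and the resulting sandwich $\lvert\{s_i<\theta\}\rvert/n\le 1-\alpha\le\lvert\{s_i\le\theta\}\rvert/n$ characterizing the minimizer interval. This is also exactly the subgradient computation the authors themselves redo inside the proofs of Lemma~\ref{lem:main} and Theorem~\ref{thm:tt-cond}; the only caveat worth flagging is that your conclusion certifies $\theta^*$ as \emph{a} $(1-\alpha)$-quantile in the usual two-sided sense, which can strictly exceed the infimum-based quantile of the paper's Definition when $(1-\alpha)n$ is an integer, but that matches the lemma's indefinite-article phrasing.
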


\section{Kandinsky Conformal Prediction}

\label{sec:algorithm}
In this section, we describe the components of the \emph{Kandinsky conformal prediction} framework. We will first present our method by extending the conditional conformal prediction algorithm by \citet{JNRR2023} to work with classes of finite-dimensional weight functions of the form $\calW = \{\Phi(\cdot)^T \beta : \beta \in \RR^d\}$, where $\Phi:\calX \times \calY \to \RR$ is a basis that is defined according to the desired coverage guarantee. After presenting our main result, we will demonstrate how this conformal prediction method can be tailored to different applications. The detailed proofs of our results from this section are in \Cref{sec: algorithm_app}.

 Given a coverage parameter $\alpha \in (0,1)$, a predictor $f:\calX \to \calY$, and a calibration dataset of $n$ examples $D = \{(X_i,Y_i)\}_{i\in [n]}$, drawn independently from a distribution $\calD$, the objective of {Kandinsky conformal prediction} is to construct a (possibly randomized) prediction set function $\calC(\cdot)$ such that, with high probability over the calibration dataset $D$ and the randomness of the method that constructs $\calC$, for every $w \in \cal W$
\begin{equation*}
\wcovdev(\calC, \alpha, w) \approx 0.
\end{equation*}
Given the predictor $f$, we use a non-conformity score function $S:\calX \times \calY \to \RR$ to measure how close the prediction $f(x)$ is to the label $y$ for any data point $(x,y) \in \calX \times \calY$. As is common in conformal prediction, our method ensures the desirable coverage for any score function $S$. However, in our more general framework, we allow for the use of a randomized score function to break potential ties in quantile regression, while keeping the assumptions about distribution $\calD$ minimal. We implement this through a randomized score function $\tS:\calX\times\calY\times \RR\to \RR$ that takes as input the covariates, labels and some random noise $\varepsilon \in \RR$.

Our method consists of two steps: one is performed during training, and the other at test time. Similar to \citet{JNRR2023}, the first step is a quantile regression on the scores of the $n$ data points in the calibration dataset, $\{S(X_i,Y_i)\}_{i\in [n]}$. For randomized score functions, we work with the set of randomized scores $\{\tS(X_i,Y_i, \varepsilon_i)\}_{i\in [n]}$, where for every point $i$, $\varepsilon_i$ is drawn independently from a distribution $\calD_{\text{rn}}$. In \Cref{alg:quantile_reg}, we compute a $(1-\alpha)$-``quantile function'' $\hat q$. This $\hat q$ is a weight function from $\calW$ that minimizes the average pinball loss of the $n$ (randomized) scores with parameter $\alpha$.
\begin{algorithm}
\caption{Quantile Regression of Kandinsky CP}
\label{alg:quantile_reg}
\textbf{Input:} $\{(X_i,Y_i)\}_{i \in [n]}$, $\calW$, and $\tS$
\begin{algorithmic}[1]
    
    \STATE For all $i \in [n]$ draw $\varepsilon_i$ independently from $\calD_{\text{rn}}$.
    \STATE Find
    $$\hat{q} \in \arg \min_{w \in \cal W} \frac{1}{n} \sum_{i=1}^n \ell_\alpha (w(X_i,Y_i), \tS(X_i,Y_i, \varepsilon_i)).$$
\end{algorithmic}
 \textbf{Return:} Quantile function $\hat q$
\end{algorithm}

At test time, we run the second step that constructs the prediction set for a given point with covariates $X_{n+1}$. For a fixed estimated quantile function $\hat q$, the prediction set produced by \Cref{alg:pred_set} includes all labels $y \in \calY$ whose (randomized) score (with noise $\varepsilon_{n+1}$ drawn from $\calD_{\text{rn}}$) is at most $\hat{ q}(X_{n+1},y)$. For more details about the computation of \Cref{alg:pred_set} see \Cref{sec: computation}.
\begin{algorithm}
\caption{Prediction Set Function of Kandinsky CP}
\label{alg:pred_set}
\textbf{Input:} $X_{n+1}$, $\hat{q}$, and $\tS$
    \begin{algorithmic}[1]
    \STATE Draw $\varepsilon_{n+1}$ independently from $\calD_{\text{rn}}$.
    \ifarxiv
    \STATE Set
    \[
        \calC(X_{n+1};\varepsilon_{n+1}) = \left\{y: \tS(X_{n+1},y, 
        \varepsilon_{n+1}) \leq \hat{q}(X_{n+1},y)\right\}.
    \]   
    \else
    \STATE Set $\calC(X_{n+1};\varepsilon_{n+1}) =$
    \begin{align*}
        \left\{y: \tS(X_{n+1},y, 
        \varepsilon_{n+1}) \leq \hat{q}(X_{n+1},y)\right\}.
    \end{align*}
    \fi
\end{algorithmic}
\textbf{Return:} $\calC(X_{n+1};\varepsilon_{n+1})$
\end{algorithm}

Our main result is stated in \Cref{thm:jointcondcov}, where we prove that, under the assumption that the distribution of the randomized score $\tS$ conditioned on the value of the basis $\Phi$ is continuous, the weighted coverage deviation of $\calC$ converges to zero with high probability at a rate of $\mathcal O(\sqrt{d/n}+d/n)$, where $d$ is the dimension of $\Phi$. When the distribution of $S(X,Y)\mid\Phi(X,Y)$ is continuous, this theorem holds for the deterministic score function $S$, since setting $\tS(x,y,\varepsilon) = S(x,y)$ satisfies the assumptions.

\begin{theorem}
    Let parameters $\alpha,\delta \in (0,1)$, and $\calW = \{\Phi(\cdot)^T \beta :\beta \in \RR^d\}$ denote a class of linear weight functions over a bounded basis $\Phi : \calX \times \calY \to \RR^d$. Without loss of generality, assume $\|\Phi(x,y)\|_\infty\leq1$ for all $x,y$. Assume that the data $\{(X_i,Y_i)\}_{i\in [n]}$ are drawn \mbox{i.i.d.} from a distribution $\calD$, $\{\varepsilon_i\}_{i\in [n]}$ are drawn \mbox{i.i.d.} from a distribution $\calD_{\text{rn}}$, independently from the dataset, and the distribution of $\tS\left(X,Y, \varepsilon\right) \mid \Phi(X,Y)$ is continuous. There exists an absolute constant $C$ such that, with probability at least $1-\delta$ over the randomness of the calibration dataset $\{(X_i,Y_i)\}_{i\in [n]}$ and the noise $\{\varepsilon_i\}_{i \in [n]}$, the (randomized) prediction set $\calC$ given by Algorithms \ref{alg:quantile_reg} and \ref{alg:pred_set} satisfies, for every $w_\beta= \Phi(\cdot)^T\beta$,
\ifarxiv
\begin{align*}
\left|\wcovdev(\calC, \alpha, w_\beta)\right| \leq 
 \|\beta\|_{1} \pth{C \sqrt{\frac{d}{n}} + \frac{d}{n} +\max \{\alpha, 1-\alpha\}\sqrt{\frac{2\ln(4d/\delta)}{n}} }.
\end{align*}
\else
\begin{align*}
&\left|\wcovdev(\calC, \alpha, w_\beta)\right| \leq \\
& \|\beta\|_{1} \pth{C \sqrt{\frac{d}{n}} + \frac{d}{n} +\max \{\alpha, 1-\alpha\}\sqrt{\frac{2\ln(4d/\delta)}{n}} }.
\end{align*}
\fi
\label{thm:jointcondcov}
\end{theorem}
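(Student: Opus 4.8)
The plan is to decompose the population weighted coverage deviation into an \emph{in-sample} term, controlled by the optimality conditions of the quantile regression in \Cref{alg:quantile_reg}, and a \emph{generalization} term, controlled by uniform convergence over the family of thresholds $\Phi(\cdot)^T\beta'$. First observe that, by construction of \Cref{alg:pred_set}, $\11\{Y\in\calC(X;\varepsilon)\}=\11\{\tS(X,Y,\varepsilon)\le\hat q(X,Y)\}$, where $\hat q=\Phi(\cdot)^T\hat\beta$ for the minimizer $\hat\beta$. Writing $w_\beta=\Phi(\cdot)^T\beta$ and using Hölder's inequality $|\beta^Tv|\le\|\beta\|_1\|v\|_\infty$, it suffices to bound in $\ell_\infty$ the random vector
$$ v(\hat\beta):=\EE\qth{\Phi(X,Y)\pth{\11\{\tS\le\Phi^T\hat\beta\}-(1-\alpha)}}, $$
since $\wcovdev(\calC,\alpha,w_\beta)=\beta^Tv(\hat\beta)$. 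I split $v(\hat\beta)=\pth{v(\hat\beta)-\hat v(\hat\beta)}+\hat v(\hat\beta)$, where $\hat v(\beta')$ is the empirical analogue with $\EE$ replaced by $\frac1n\sum_i$ over the calibration triples $(X_i,Y_i,\varepsilon_i)$.

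For the in-sample term $\hat v(\hat\beta)$, I would invoke first-order optimality of the convex, piecewise-linear pinball objective. Writing $\tilde s_i=\tS(X_i,Y_i,\varepsilon_i)$ and using $\partial_\theta\ell_\alpha(\theta,s)=\alpha-\11\{s>\theta\}$ for $s\ne\theta$, the stationarity condition at $\hat\beta$ reads $\frac1n\sum_i\Phi_i\pth{\11\{\tilde s_i\le\Phi_i^T\hat\beta\}-(1-\alpha)}=\frac1n\sum_{i:\,\tilde s_i=\Phi_i^T\hat\beta}\Phi_i\gamma_i$, for coefficients $\gamma_i\in[0,1]$ absorbing the subgradient at the kinks. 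Since the objective is a sum of $n$ pinball terms in $\RR^d$, a basic optimal solution can be taken at a vertex with at most $d$ tied points (zero residuals), and continuity of $\tS\mid\Phi$ makes degenerate configurations measure zero. As $\|\Phi_i\|_\infty\le1$, this yields $\|\hat v(\hat\beta)\|_\infty\le d/n$, hence $|\beta^T\hat v(\hat\beta)|\le\|\beta\|_1\,d/n$.

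For the generalization term I would bound $\sup_{\beta'\in\RR^d}\|v(\beta')-\hat v(\beta')\|_\infty$ coordinate by coordinate and union bound. For fixed $j\in[d]$, the class $\sth{(x,y,\varepsilon)\mapsto\Phi_j(x,y)\pth{\11\{\tS(x,y,\varepsilon)\le\Phi(x,y)^T\beta'\}-(1-\alpha)}:\beta'\in\RR^d}$ is bounded by $\max\{\alpha,1-\alpha\}$ and is governed by the halfspace $\{\Phi^T\beta'-\tS\ge0\}$ in the lifted space $\RR^{d+1}$, so it has VC dimension $O(d)$. Symmetrization together with a standard VC/Rademacher bound controls the expected supremum by $C\sqrt{d/n}$, while a bounded-differences (McDiarmid) argument---replacing one triple shifts each empirical average by at most $2\max\{\alpha,1-\alpha\}/n$---concentrates the supremum around its mean at scale $\max\{\alpha,1-\alpha\}\sqrt{2\ln(4d/\delta)/n}$, where a union bound over the $d$ coordinates and the two signs yields the $\ln(4d/\delta)$ factor. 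Combining the two terms through Hölder gives the stated inequality.

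The main obstacle is the generalization step: securing the sharp $\sqrt{d/n}$ rate \emph{uniformly} over all thresholds $\beta'$ (which is precisely what allows $\hat\beta$ to be data-dependent and possibly unbounded) requires identifying the correct VC/Rademacher complexity of the composite class, rather than naively bounding each of the $d$ indicators and losing an extra factor of $d$. The continuity of $\tS\mid\Phi$ is what makes both steps go through cleanly---it removes atoms at the thresholds, so the in-sample tie count is at most $d$ and the population map $\beta'\mapsto v(\beta')$ has no jumps---while the randomized score $\tS$ is the device that enforces this continuity with minimal assumptions on $\calD$.
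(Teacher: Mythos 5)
Your proposal is correct and follows essentially the same route as the paper's proof: the subgradient stationarity of the pinball objective reduces the in-sample deviation to the count of interpolated points, continuity of $\tS \mid \Phi$ bounds that count by $d$ almost surely, uniform convergence over lifted halfspaces via VC/Rademacher plus a union bound over the $d$ coordinates supplies the $C\sqrt{d/n}$ and $\sqrt{2\ln(4d/\delta)/n}$ terms, and $\ell_1$--$\ell_\infty$ duality yields the $\|\beta\|_1$ factor. The only slight imprecision is the appeal to a ``basic optimal solution at a vertex'' for the tie count---since \Cref{alg:quantile_reg} returns an arbitrary minimizer, one should argue (as the paper does) that with probability one \emph{no} $\beta\in\RR^d$ interpolates more than $d$ of the scores, which follows directly from the continuity assumption you already invoke.
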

As a sketch of the proof, we first establish a connection between the subgradient of the empirical pinball loss and the empirical weighted coverage. We then prove a concentration result for the weighted coverage. In contrast, \citet{RO22,JNRR2023} derive concentration results directly for the pinball loss, which leads to a slower convergence rate of $\mathcal O((d\log n/n)^{1/4})$. 

As a corollary of \Cref{thm:jointcondcov}, we derive a bound on the expected coverage deviation of $\calC$, with the expectation taken over the randomness of the calibration dataset and the noise in the randomized scores. 
\begin{corollary}
Let $\alpha, \delta, \calW, w_\beta$ be specified as in \Cref{thm:jointcondcov} and consider the same assumptions on $\{(X_i, Y_i,\varepsilon_i)\}_{i\in [n]}$, $\tilde S$, and $\Phi$  in \Cref{thm:jointcondcov}. 
 Then, there exists an absolute constant $C$ such that the (randomized) prediction set $\calC$, given by Algorithms \ref{alg:quantile_reg} and \ref{alg:pred_set}, satisfies
\begin{align*}
\EE_{D,E}  \left[ \sup_{w_\beta\in\calW} \frac{\wcovdev(\calC, \alpha,w_\beta)}{\|\beta\|_{1}}\right] \leq 
C \sqrt{\frac{d}{n}} + \frac{d}{n},
\end{align*}
where $D$ is the calibration dataset $\{(X_i,Y_i)\}_{i\in [n]}$ and $E$ is the corresponding noise $\{\varepsilon_i\}_{i\in [n]}$.
\label{cor: expected-cov}
\end{corollary}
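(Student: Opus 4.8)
The plan is to deduce \Cref{cor: expected-cov} from \Cref{thm:jointcondcov} by the standard device of integrating the tail. Define the random variable
\[
Z := \sup_{w_\beta \in \calW} \frac{\wcovdev(\calC, \alpha, w_\beta)}{\|\beta\|_1}.
\]
First I would record that $Z \geq 0$ and in fact $Z = \sup_{\beta \neq 0} |\wcovdev(\calC,\alpha,w_\beta)|/\|\beta\|_1$. Indeed, $\wcovdev(\calC,\alpha,\cdot)$ is linear in its weight argument and $w_\beta = \Phi(\cdot)^T\beta$ is linear in $\beta$, so replacing $\beta$ by $-\beta$ negates the numerator while leaving $\|\beta\|_1$ unchanged; taking the supremum over both signs turns the deviation into its absolute value and forces nonnegativity. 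The point of this observation is that the quantity bounded with high probability in \Cref{thm:jointcondcov} is exactly this $Z$.

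Next I would translate the theorem into a tail bound. Writing $a := C\sqrt{d/n} + d/n$ and $b := \max\{\alpha,1-\alpha\}\sqrt{2/n}$, \Cref{thm:jointcondcov} says precisely that $\PP\big[Z > a + b\sqrt{\ln(4d/\delta)}\,\big] \leq \delta$ for every $\delta \in (0,1)$, with the same absolute constant $C$ for all $\delta$. Inverting the substitution $u = a + b\sqrt{\ln(4d/\delta)}$ gives
\[
\PP[Z > u] \leq 4d\,\exp\!\pth{-\frac{(u-a)^2}{b^2}}, \qquad u \geq a + b\sqrt{\ln(4d)},
\]
and for smaller $u$ I simply use $\PP[Z > u] \leq 1$.

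Since $Z \geq 0$, I would then integrate:
\[
\EE_{D,E}[Z] = \int_0^\infty \PP[Z > u]\,\diff u \leq \pth{a + b\sqrt{\ln(4d)}} + \int_{a + b\sqrt{\ln(4d)}}^\infty 4d\,e^{-(u-a)^2/b^2}\,\diff u.
\]
Substituting $v = u - a$ and applying the Gaussian tail estimate $\int_x^\infty e^{-t^2}\,\diff t \leq e^{-x^2}/(2x)$ at $x = \sqrt{\ln(4d)}$ bounds the remaining integral by $b/2$ (using $\ln(4d) \geq \ln 4 > 1$ for $d \geq 1$). Hence $\EE_{D,E}[Z] \leq a + b\sqrt{\ln(4d)} + b/2$. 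Finally I absorb the lower-order terms: since $\max\{\alpha,1-\alpha\} \leq 1$ we have $b \leq \sqrt{2/n}$, and since $\ln(4d) \leq C' d$ for an absolute constant $C'$ and all integers $d \geq 1$, both $b\sqrt{\ln(4d)}$ and $b/2$ are $\calO(\sqrt{d/n})$; folding them into the leading term (after enlarging $C$) yields $\EE_{D,E}[Z] \leq C\sqrt{d/n} + d/n$.

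The main obstacle is the bookkeeping in the tail-to-expectation step rather than any deep difficulty: one must check that the constant $C$ in \Cref{thm:jointcondcov} is genuinely independent of $\delta$, so that the family of high-probability events can be read as a single sub-Gaussian tail uniform in $u$, and one must verify that the $\sqrt{\ln(4d)/n}$ penalty is truly dominated by $\sqrt{d/n}$ rather than leaving behind a stray logarithmic factor. Both points become routine once the linearity argument identifies $Z$ with the supremum of the absolute normalized deviation.
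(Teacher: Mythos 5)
Your proposal is correct and follows essentially the same route as the paper's proof: invert the high-probability bound of \Cref{thm:jointcondcov} into a sub-Gaussian tail for the supremum, integrate the tail, and absorb the $\sqrt{\ln d/n}$-type remainders into $C\sqrt{d/n}$ via $\ln d\lesssim d$. The only cosmetic difference is that the paper first splits $\sqrt{\ln(4d/\delta)}\leq\sqrt{\ln d}+\sqrt{\ln(4/\delta)}$ so the tail has prefactor $4$ rather than your $4d$, which you handle equivalently by starting the Gaussian tail integral at $\sqrt{\ln(4d)}$.
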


\paragraph{Weight Functions Defined on the Covariates}

\citet{ACDR24} show that when the covariate-based weight class $\calW$ contains a class of binary functions with VC dimension $d$, equal to the dimension of the basis $\Phi$, then the convergence rate of the expected coverage deviation is at least $O(\sqrt{d \alpha(1-\alpha)/n})$. This means that for $n > d$, the rate in \Cref{cor: expected-cov} is asymptotically optimal. In \Cref{sec: comparison} we compare our upper bounds with prior work.

\subsection{Applications}
Kandinsky conformal prediction is a general framework that can be specialized to obtain several types of weighted or conditional conformal prediction. The choice of the appropriate basis $\Phi$ and, consequently, the weight function class $\calW$ depends on the desired coverage guarantee. In this subsection, we explore how Kandinsky conformal prediction can be applied to several scenarios, including group-conditional conformal prediction, Mondrian conformal prediction, group-conditional conformal prediction with fractional group membership, and conformal prediction under distribution shift.

\paragraph{Group-Conditional Conformal Prediction} Suppose we want to achieve the target coverage $(1-\alpha)$ for every group of points $G \subset \calX \times \calY$ within a finite set of potentially overlapping groups $\calG$. Let $\pi_G(\calC)$ be the probability that the true label is included in a fixed prediction set $\calC$, conditioned on the datapoint $(X_{n+1},Y_{n+1})$ belonging to some group $G$. 
\ifarxiv
Formally, with the calibration dataset $D=\{(X_i,Y_i)\}_{i\in [n]}$ and the corresponding noise $E=\{\varepsilon_i\}_{i\in [n]}$, let
\begin{align*}
\pi_{G}(\calC) \coloneqq 
 \PP[Y_{n+1}\in \calC(X_{n+1}; \varepsilon_{n+1}) 
     \mid D,E, (X_{n+1},Y_{n+1})\in G ].
\end{align*}
\else
Formally, with the calibration dataset $D=\{(X_i,Y_i)\}_{i\in [n]}$ and the corresponding noise $E=\{\varepsilon_i\}_{i\in [n]}$, let
\begin{align*}
&\pi_{G}(\calC) \coloneqq \\
 &\PP[Y_{n+1}\in \calC(X_{n+1}; \varepsilon_{n+1}) 
     \mid D,E, (X_{n+1},Y_{n+1})\in G ].
\end{align*}
\fi
We want to ensure that for every $G \in \calG$, $\pi_G(\calC) \approx 1-\alpha$. Here we can define $\Phi$ to be a $|\calG|$-dimensional vector that has an entry $\11\{(x,y) \in G\}$ for every $G \in \calG$. \Cref{cor: group-cond} provides the rate at which $\pi_G(\calC)$ converges to $(1-\alpha)$, when $\calC$ is constructed by Algorithms \ref{alg:quantile_reg} and \ref{alg:pred_set}. For conciseness, we define $\pi_G \coloneqq \PP[(X_{n+1},Y_{n+1}) \in G]$.

\begin{corollary}
\label{cor: group-cond}
     Let parameters $\alpha,\delta \in (0,1)$, and $\calW = \{\sum_{G \in \calG} \beta_G \11\{(x,y) \in G\}: \beta_G \in \RR, \forall G \in \calG \}$. Assume $\{(X_i, Y_i,\varepsilon_i)\}_{i\in [n+1]}$ are \mbox{i.i.d.} samples. Under the same assumptions on $\tilde S$ and $\Phi$  in \Cref{thm:jointcondcov},
there exists an absolute constant $C$ such that, with probability at least $1-\delta$ over the randomness of the calibration dataset $\{(X_i,Y_i)\}_{i\in [n]}$ and the noise $\{\varepsilon_i\}_{i \in [n]}$, the (randomized) prediction set $\calC$ given by Algorithms \ref{alg:quantile_reg} and \ref{alg:pred_set} satisfies, for all $G \in \calG$,
     \ifarxiv
     \begin{align*}
    \left|\pi_{G} (\calC) -(1-\alpha)\right| \leq 
    {} 
    \frac{1}{\pi_G}\pth{C \sqrt{\frac{|\calG|}{n}} + \frac{|\calG|}{n} +\max \{\alpha, 1-\alpha\}\sqrt{\frac{2\ln(4|\calG|/\delta)}{n}} }.
    \end{align*}
     \else
    \begin{align*}
    &\left|\pi_{G} (\calC) -(1-\alpha)\right| \leq \\
    &{} 
    \frac{1}{\pi_G}\pth{C \sqrt{\frac{|\calG|}{n}} + \frac{|\calG|}{n} +\max \{\alpha, 1-\alpha\}\sqrt{\frac{2\ln(4|\calG|/\delta)}{n}} }.
    \end{align*}
    \fi
\end{corollary}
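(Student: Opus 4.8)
The plan is to reduce the corollary directly to \Cref{thm:jointcondcov} by relating the conditional coverage probability $\pi_G(\calC)$ to the weighted coverage deviation of the single group indicator $w_G(x,y) = \11\{(x,y) \in G\}$. Under the stated choice of $\calW$, the basis $\Phi$ is the $|\calG|$-dimensional vector of group indicators, so $d = |\calG|$, and each $w_G$ is $w_\beta$ for $\beta = e_G$, the standard basis vector indexed by $G$, which has $\|e_G\|_1 = 1$.

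First I would fix the realized calibration dataset $D$ and noise $E$, which together determine $\hat q$ and hence the prediction set function $\calC$; the expectation defining $\wcovdev$ is then taken only over the test point $(X_{n+1},Y_{n+1})$ and the internal randomness $\varepsilon_{n+1}$. Using the independence of the test point and its noise from $(D,E)$, I can expand
\begin{align*}
\wcovdev(\calC, \alpha, w_G)
&= \EE\big[\11\{(X_{n+1},Y_{n+1}) \in G\}\big(\11\{Y_{n+1} \in \calC(X_{n+1};\varepsilon_{n+1})\} - (1-\alpha)\big) \mid D,E\big] \\
&= \PP[(X_{n+1},Y_{n+1})\in G,\ Y_{n+1} \in \calC \mid D,E] - (1-\alpha)\,\pi_G.
\end{align*}
By the definition of $\pi_G(\calC)$ as a conditional probability, together with $\PP[(X_{n+1},Y_{n+1})\in G \mid D,E] = \pi_G$ (again by independence), the joint probability factors as $\PP[(X_{n+1},Y_{n+1})\in G,\ Y_{n+1} \in \calC \mid D,E] = \pi_G(\calC)\cdot\pi_G$, which yields the key identity
\begin{align*}
\wcovdev(\calC, \alpha, w_G) = \pi_G\big(\pi_G(\calC) - (1-\alpha)\big).
\end{align*}

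Rearranging gives $\pi_G(\calC) - (1-\alpha) = \wcovdev(\calC, \alpha, w_G)/\pi_G$. I would then invoke \Cref{thm:jointcondcov} on its own high-probability event: with probability at least $1-\delta$ the stated bound on $|\wcovdev(\calC,\alpha,w_\beta)|$ holds simultaneously for \emph{every} $\beta$, in particular for each $\beta = e_G$ at once. Substituting $\|e_G\|_1 = 1$ and $d = |\calG|$ into the theorem's bound and dividing through by $\pi_G$ reproduces exactly the claimed inequality, valid for all $G \in \calG$ on the same event.

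The only real step requiring care is the reduction identity $\wcovdev(\calC, \alpha, w_G) = \pi_G(\pi_G(\calC) - (1-\alpha))$ — specifically, handling the conditioning on $(D,E)$ so that the data-dependent randomness of $\calC$ (through $\hat q$) is held fixed while the remaining test-point expectation produces precisely the conditional coverage $\pi_G(\calC)$. No new concentration argument is needed beyond \Cref{thm:jointcondcov}; everything else is a direct substitution, with the uniformity over groups inherited from the theorem's uniformity over $\beta$.
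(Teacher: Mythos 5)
Your proposal is correct and matches the paper's argument in substance: the paper derives this corollary as the special case of \Cref{cor: fract-cov} with $Z=(X,Y)$ and $\phi=\id$, whose proof applies \Cref{thm:jointcondcov} to the weight $\frac{1}{\pi_G}\11\{(x,y)\in G\}$ (with $\|\beta\|_1 = 1/\pi_G$) and uses exactly your factorization $\wcovdev(\calC,\alpha,w_G)=\pi_G(\pi_G(\calC)-(1-\alpha))$ to convert the weighted deviation into the conditional coverage gap. Your route is just slightly more direct, taking $\beta=e_G$ and dividing by $\pi_G$ at the end rather than absorbing the normalization into the weight, and the uniformity over $G$ is correctly inherited from the theorem's simultaneous guarantee over all $\beta$.
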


\paragraph{Mondrian Conformal Prediction}
A special case of group-conditional conformal prediction is Mondrian conformal prediction, where groups in $\calG$ are disjoint and form a partition of the domain $\calX \times \calY$. Since each $(X_i, Y_i)$ in the calibration dataset belongs to exactly one group in $\calG$, we can verify that \Cref{alg:quantile_reg} computes a value $\beta_G \in \RR$ for every group $G$ independently. By \Cref{lem: quant-reg}, for every $G \in \calG$, if $\beta_G$ is the largest value that minimized the pinball loss, then it is a $(1-\alpha)$-quantile of the scores of the calibration datapoints that belong to $G$. As a result, for a test point $X_{n+1}$, we construct the prediction set $\calC(X_{n+1})$ by including every $y$ where the score $\tS(X_{n+1},y,\varepsilon)$ is at most the quantile $\beta_G$ corresponding to the group $G$ that contains $(X_{n+1},y)$. This method is the same as the one presented in \cite{VLNG03}, with the difference that in that method, the quantiles are slightly adjusted to achieve the expected marginal coverage guarantee through exchangeability. 

\paragraph{Fractional Group Membership}
In some cases, we are concerned with the conditional coverage in groups defined by unobserved attributes. Let $\calZ$ be the domain of these unobserved attributes, and let $\calD'$, defined over $\calX\times \calY \times \calZ $, be the joint distribution of the covariates, the label, and the unobserved attributes. In this setting, $\calG$ is a set of groups defined as subsets of $\calZ$. Let 
\begin{align*}
    \pi^Z_{G}(\calC) \coloneqq
 \PP[Y_{n+1}\in \calC(X_{n+1}; \varepsilon_{n+1}) 
     \mid D,E, Z_{n+1}\in G ].
\end{align*} 
Then, we want to ensure that $\pi^Z_G(\calC) \approx 1-\alpha$ for every $G \in \calG$. To achieve this, we set the basis of the weight function class as the probability of $Z\in G$ given a statistic $\phi(X,Y)$. For every $G \in \calG$,
\begin{align}
\label{eq:Phi_XY}
    \Phi_G(x,y) = \PP[Z \in G \mid \phi(X,Y)=\phi(x,y)].
\end{align}
The simplest example of the statistic $\phi$ is $\phi_{\text{XY}}(X,Y) = (X,Y)$. In practice, if a pretrained $\Phi$ is not available, we may use the protected attributes $Z$ in calibration data to estimate the probabilities in $\Phi$. In some cases, we may only have indirect access to the covariates through the pretrained predictor $f(X)$. Then, we can construct the statistic as $\phi_{\text{FY}}(X,Y) = (f(X), Y)$. In general, \Cref{cor: fract-cov} shows that our algorithm ensures group conditional coverage if $\phi$ is a sufficient statistic for the score function, which is satisfied by both $\phi_{\text{XY}},\phi_{\text{FY}}$.

 For our theoretical result, we assume that we already know these conditional probabilities, $\Phi_G(x,y)$, for all $G \in \calG$, and that the calibration and the test data 
 do not include unobserved attributes. \Cref{cor: fract-cov} shows that by constructing 
 $\calC$ according to Algorithms \ref{alg:quantile_reg} and \ref{alg:pred_set}, the coverage converges to $(1-\alpha)$ at an $O(\sqrt{|\calG|/n}+|\calG|/n)$ rate. For brevity, we define $\pi_G^Z \coloneqq \PP[Z_{n+1} \in G]$. 
 
\begin{corollary}
\label{cor: fract-cov}
Let $\alpha, \delta \in (0,1)$, $\phi(X,Y)$ be a sufficient statistic for $\tS$, such that $\tS$ is conditionally independent of $X,Y$ given $\phi$, and
    $
        \calW = \{\sum_{G \in \calG} \beta_G \Phi_G: \beta_G \in \RR, \forall~G\in\calG\}.
    $
    Assume $\{(X_i, Y_i,\varepsilon_i)\}_{i\in [n+1]}$ are \mbox{i.i.d.} samples. Under the same assumptions on $\tilde S$ and $\Phi$  in \Cref{thm:jointcondcov}, there exists an absolute constant $C$ such that, with probability at least $1-\delta$ over the randomness of the calibration dataset $\{(X_i,Y_i)\}_{i\in [n]}$ and the noise $\{\varepsilon_i\}_{i \in [n]}$, the (randomized) prediction set $\calC$ given by Algorithms \ref{alg:quantile_reg} and \ref{alg:pred_set} satisfies, for all $G \in \calG$,
    \ifarxiv
    \begin{align*}
    \left|\pi^Z_{G}(\calC)  -(1-\alpha)\right| \leq 
    \frac{1}{\pi_G^Z} 
     \pth{C\sqrt{\frac{|\calG|}{n}}+\frac{|\calG|}{n}+\max \{\alpha, 1-\alpha\}\sqrt{\frac{2\ln(4|\calG|/\delta)}{n}} }.
    \end{align*}   
    \else
    \begin{align*}
    &\left|\pi^Z_{G}(\calC)  -(1-\alpha)\right| \leq \\
    &\frac{1}{\pi_G^Z} 
     \pth{C\sqrt{\frac{|\calG|}{n}}+\frac{|\calG|}{n}+\max \{\alpha, 1-\alpha\}\sqrt{\frac{2\ln(4|\calG|/\delta)}{n}} }.
    \end{align*}     
    \fi
\end{corollary}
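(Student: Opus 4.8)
The plan is to reduce \Cref{cor: fract-cov} to \Cref{thm:jointcondcov} by recognizing the group-conditional discrepancy $\pi^Z_G(\calC)-(1-\alpha)$ as a weighted coverage deviation for the single weight $w=\Phi_G$, and then invoking the theorem's high-probability bound. Observe first that $\Phi=(\Phi_G)_{G\in\calG}$ is a $|\calG|$-dimensional basis valued in $[0,1]^{|\calG|}$, so the normalization $\|\Phi(x,y)\|_\infty\le 1$ holds automatically; choosing $\beta$ to be the indicator vector of a fixed group $G$ gives $w_\beta=\Phi_G$, $\|\beta\|_1=1$, and $d=|\calG|$. Substituting these into the bound of \Cref{thm:jointcondcov} and dividing by $\pi_G^Z$ reproduces exactly the claimed inequality, while the ``for every $w_\beta$'' clause of the theorem guarantees that all $|\calG|$ groups are controlled simultaneously on the same probability-$(1-\delta)$ event.

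The first concrete step is to rewrite the conditional coverage. Since the test tuple $(X_{n+1},Y_{n+1},Z_{n+1},\varepsilon_{n+1})$ is independent of $(D,E)$, the conditioning event has probability $\PP[Z_{n+1}\in G\mid D,E]=\pi_G^Z$, and hence
\begin{align*}
\pi^Z_G(\calC)-(1-\alpha)
=\frac{1}{\pi_G^Z}\,\EE\!\left[\11\{Z_{n+1}\in G\}\,\pth{\11\{Y_{n+1}\in\calC(X_{n+1};\varepsilon_{n+1})\}-(1-\alpha)}\;\middle|\;D,E\right].
\end{align*}
Writing $V\coloneqq\11\{Y_{n+1}\in\calC(X_{n+1};\varepsilon_{n+1})\}-(1-\alpha)$, it therefore suffices to show that the numerator equals $\wcovdev(\calC,\alpha,\Phi_G)=\EE[\Phi_G(X_{n+1},Y_{n+1})\,V\mid D,E]$; that is, that $\11\{Z_{n+1}\in G\}$ may be swapped for $\Phi_G(X_{n+1},Y_{n+1})=\PP[Z\in G\mid \phi(X,Y)]$ inside the expectation.

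This swap is the crux of the argument, and the place where sufficiency of $\phi$ is essential. I would first integrate out $Z_{n+1}$: conditioning on $(X_{n+1},Y_{n+1},\varepsilon_{n+1},D,E)$ makes $V$ deterministic, and $Z_{n+1}\perp(\varepsilon_{n+1},D,E)\mid(X_{n+1},Y_{n+1})$, so the numerator becomes $\EE[\PP[Z\in G\mid X_{n+1},Y_{n+1}]\,V\mid D,E]$. The error incurred by replacing $\PP[Z\in G\mid X,Y]$ with $\Phi_G$ is $\EE[(\PP[Z\in G\mid X,Y]-\Phi_G(X,Y))\,V\mid D,E]$. The key observation is that, given $(D,E)$, the quantity $\EE[V\mid X_{n+1},Y_{n+1},D,E]$ depends on $(X_{n+1},Y_{n+1})$ only through $\phi$: indeed $\hat q\in\calW$ is a linear combination of the $\Phi_G$, each a function of $\phi$, so $\hat q(X_{n+1},Y_{n+1})$ is $\phi$-measurable, and by the sufficiency hypothesis the conditional law of $\tS(X_{n+1},Y_{n+1},\varepsilon_{n+1})$ given $(X_{n+1},Y_{n+1})$ depends only on $\phi(X_{n+1},Y_{n+1})$. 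Hence $\EE[V\mid X_{n+1},Y_{n+1},D,E]=m(\phi(X_{n+1},Y_{n+1}))$ for some function $m$. Conditioning the error term on $\phi$ and using the tower property then gives $\EE[m(\phi)\,\EE[\PP[Z\in G\mid X,Y]-\Phi_G\mid\phi]\mid D,E]=0$, since $\EE[\PP[Z\in G\mid X,Y]\mid\phi]=\PP[Z\in G\mid\phi]=\Phi_G$ by definition of $\Phi_G$.

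Combining these steps, the numerator equals $\wcovdev(\calC,\alpha,\Phi_G)$, and applying \Cref{thm:jointcondcov} with $w=\Phi_G$ completes the proof. I expect the main obstacle to be making the sufficiency argument fully rigorous — in particular, carefully formalizing ``the conditional law of $\tS$ given $(X,Y)$ depends only on $\phi$'' and justifying the interchange of conditioning and integration that kills the error term — whereas the reduction to the theorem and the normalization bookkeeping are routine.
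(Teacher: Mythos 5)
Your proposal is correct and follows essentially the same route as the paper: both reduce the claim to \Cref{thm:jointcondcov} applied to the weight $\Phi_G$ (you use $\|\beta\|_1=1$ and divide by $\pi_G^Z$ afterwards; the paper takes $w_\beta=\Phi_G/\pi_G^Z$ with $\|\beta\|_1=1/\pi_G^Z$, which is the same thing), and both use the sufficiency of $\phi$ to identify $\EE[\Phi_G(X,Y)\,V]$ with $\pi_G^Z\,(\pi_G^Z(\calC)-(1-\alpha))$. The only cosmetic difference is that the paper establishes this identity via a Bayes-formula chain after noting that $\tS$ and $\hat q$ are $(\phi,\varepsilon)$-measurable, whereas you integrate out $Z_{n+1}$ first and then swap $\PP[Z\in G\mid X,Y]$ for $\Phi_G$ using the $\phi$-measurability of $\EE[V\mid X,Y,D,E]$ and the tower property — the same computation read in the opposite direction.
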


\paragraph{Distribution Shifts}

Kandinsky conformal prediction can also ensure the target coverage under a class of distribution shifts, where the joint distribution over both the covariates and the labels can change. In this context, we assume that the calibration dataset $\{(X_i,Y_i)\}_{i\in[n]}$ is sampled \mbox{i.i.d.} from a source distribution $\calD$, while the test example $(X_{n+1}, Y_{n+1})$ may be drawn independently from another distribution $\calD_{T}$. The objective is to construct a prediction set that satisfies
\begin{align*}
    \PP_{\calD_T} [Y_{n+1} \in \calC(X_{n+1};\varepsilon_{n+1})\mid \{X_i,Y_i,\varepsilon_i\}_{i \in [n]}] \approx 1-\alpha,
\end{align*}
for a set of test distributions $\calD_T$. Given a weight function class $\calW$, \Cref{cor: distr-shift} states that we can achieve the desired coverage for all distribution shifts $\calD_{T}$ with its Radon-Nikodym derivative $\frac{d\PP_{\calD_{T}}}{d\PP_{\calD}}(x,y)$ contained in $\calW$. 


\begin{corollary}
\label{cor: distr-shift}
     Let $\alpha$ and $\delta$ be parameters in $(0,1)$, and $\calW = \{\Phi(\cdot)^T \beta :\beta \in \RR^d\}$ denote a class of linear weight functions over a bounded basis $\Phi : \calX \times \calY \to \RR^d$. 
     Under the same assumptions on $\{(X_i, Y_i,\varepsilon_i)\}_{i\in [n]}$, $\tilde S$, and $\Phi$  in \Cref{thm:jointcondcov}, there exists an absolute constant $C$ such that, for every distribution $\calD_{\text{T}}$ such that $\frac{d\PP_{\calD_{T}}}{d\PP_{\calD}} \in \calW$ and $ \left|\frac{d\PP_{\calD_{T}}}{d\PP_{\calD}}(x,y) \right|\leq B$ for any $x \in \calX$ and $y \in \calY$, with probability at least $1-\delta$ over the randomness of the calibration dataset $\{(X_i,Y_i)\}_{i\in [n]}$ and the noise $\{\varepsilon_i\}_{i \in [n]}$, the (randomized) prediction set $\calC$, given by Algorithms \ref{alg:quantile_reg} and \ref{alg:pred_set}, satisfies
     \ifarxiv
     \begin{align*}
        |\PP [Y_{n+1} \in \calC(X_{n+1};\varepsilon_{n+1}) \mid \{X_i,Y_i,\varepsilon_i\}_{i \in [n]}] - (1-\alpha)|  \leq 
        B \pth{C \sqrt{\frac{d}{n}} + \frac{d}{n} +\max \{\alpha, 1-\alpha\}\sqrt{\frac{2\ln(4/\delta)}{n}} } ,
    \end{align*}   
     \else
    \begin{align*}
        &|\PP [Y_{n+1} \in \calC(X_{n+1};\varepsilon_{n+1}) \mid \{X_i,Y_i,\varepsilon_i\}_{i \in [n]}] - (1-\alpha)| \\ &\leq 
        B \pth{C \sqrt{\frac{d}{n}} + \frac{d}{n} +\max \{\alpha, 1-\alpha\}\sqrt{\frac{2\ln(4/\delta)}{n}} } ,
    \end{align*}
    \fi
     where $(X_{n+1},Y_{n+1})$ are drawn independently from the distribution $\calD_{\text{T}}$, and $\varepsilon_{n+1}$ are drawn independently from the distribution $\calD_{\text{rn}}$.
\end{corollary}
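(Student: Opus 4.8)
The plan is to reduce the claim to a bound on the weighted coverage deviation of $\calC$ for the single weight function given by the density ratio, and then to rerun the argument behind \Cref{thm:jointcondcov} specialized to that one function. Write $w^\star \coloneqq \frac{d\PP_{\calD_T}}{d\PP_{\calD}}$, which by hypothesis lies in $\calW$, so $w^\star = \Phi(\cdot)^T\beta^\star$ for some $\beta^\star \in \RR^d$, and satisfies $|w^\star(x,y)| \le B$. Since the calibration data and noise are fixed once we condition on $\{X_i,Y_i,\varepsilon_i\}_{i\in[n]}$, the prediction set $\calC$ is deterministic, and a change of measure gives
\begin{align*}
\PP_{\calD_T}[Y_{n+1}\in\calC(X_{n+1};\varepsilon_{n+1})\mid\{X_i,Y_i,\varepsilon_i\}_{i\in[n]}] = \EE_{\calD}[w^\star(X,Y)\,\11\{Y\in\calC(X;\varepsilon)\}].
\end{align*}
Because $\EE_\calD[w^\star] = 1$, subtracting $(1-\alpha) = (1-\alpha)\EE_\calD[w^\star]$ shows that the left-hand quantity minus $(1-\alpha)$ equals exactly $\wcovdev(\calC,\alpha,w^\star)$. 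Thus it suffices to bound $|\wcovdev(\calC,\alpha,w^\star)|$.

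First I would reuse the decomposition from the proof of \Cref{thm:jointcondcov}, applied to the fixed function $w^\star$. The subgradient optimality condition for \Cref{alg:quantile_reg} is the vector identity $\frac1n\sum_i \Phi(X_i,Y_i)\,g_i = 0$ with $g_i \in \partial_\theta \ell_\alpha(\hat q(X_i,Y_i),\tS(X_i,Y_i,\varepsilon_i))$; taking the inner product with $\beta^\star$ turns this into a statement about $w^\star$, showing that the empirical weighted coverage $\frac1n\sum_i w^\star(X_i,Y_i)(\11\{Y_i\in\calC\}-(1-\alpha))$ vanishes up to the contribution of points lying exactly on the quantile surface. Under the continuity assumption on $\tS\mid\Phi$, at most $d$ such points occur, each contributing at most $B\max\{\alpha,1-\alpha\}/n$, yielding the $B\,d/n$ term. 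Next, the gap between the empirical and population weighted coverage of the data-dependent $\calC$ is controlled by a uniform deviation over the class of candidate prediction sets $\{(x,y):\tS(x,y,\varepsilon)\le\Phi(x,y)^T\beta\}$, whose VC dimension is $O(d)$; since $|w^\star|\le B$, a Rademacher/symmetrization bound gives the $B\,C\sqrt{d/n}$ term, and a bounded-differences (McDiarmid) inequality for this single bounded process supplies the high-probability term $B\max\{\alpha,1-\alpha\}\sqrt{2\ln(4/\delta)/n}$.

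The crucial difference from \Cref{thm:jointcondcov}, and the place where I expect the main care is needed, is that the general theorem is uniform over all $w_\beta\in\calW$ and therefore decomposes $w_\beta$ into its $d$ coordinates $\Phi_j$ and unions over them, producing the factor $\|\beta\|_1$ and the logarithmic term $\ln(4d/\delta)$. Here, by treating $w^\star$ as a single bounded function throughout—using $|w^\star|\le B$ in the tie term, the Rademacher bound, and McDiarmid, rather than its coordinate representation—we avoid the union bound entirely, which is exactly what replaces $\|\beta\|_1$ by $B$ and $\ln(4d/\delta)$ by $\ln(4/\delta)$. The main obstacle is thus bookkeeping: verifying that every step of the original proof can be carried through with $B = \sup|w^\star|$ as the controlling constant. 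This requires the hypothesis $w^\star\in\calW$, so that the vector subgradient condition can be contracted against $\beta^\star$, together with the boundedness $|w^\star|\le B$, so that the empirical process stays bounded; meanwhile the dimension $d$ continues to enter only through the complexity of the prediction-set class and the number of active ties.
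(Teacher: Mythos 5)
Your proposal is correct and follows essentially the same route as the paper: the paper proves the corollary by applying its single-weight-function bound (\Cref{lem:main}, which is exactly the version of \Cref{thm:jointcondcov} for one fixed $w\in\calW$ with $\|w\|_\infty$ in place of $\|\beta\|_1$ and $\ln(4/\delta)$ in place of $\ln(4d/\delta)$) to $w = \tfrac{d\PP_{\calD_T}}{d\PP_{\calD}}$, combined with the same change-of-measure identity you state. You re-derive that lemma inline rather than cite it, but the substance—contracting the vector subgradient condition against $\beta^\star$, the at-most-$d$ ties, the VC/Rademacher bound, and avoiding the coordinatewise union bound—is identical to the paper's argument.
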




\section{Test-time Quantile Regression}
\label{sec:testtimeqr}

In this section, we explore an alternative quantile regression-based algorithm that extends the method proposed in \cite{GCC2023} to weight function classes over $\calX \times \calY$. This method, detailed in \Cref{alg:tt_quantile_reg}, is more complicated than Algorithms \ref{alg:quantile_reg}, \ref{alg:pred_set} and performs both steps during test time.

Specifically, given the covariates of a test point $X_{n+1}$, \Cref{alg:tt_quantile_reg} computes a separate quantile function $\hat{q}_y$ for each label $y \in \calY$ by minimizing the average pinball loss over the scores of both $\{(X_{i},Y_{i})\}_{i \in [n]}$ and $(X_{n+1},y)$. Since $X_{n+1}$ is required for the quantile regression, this step is performed at test time. The prediction set $\calC$ produced by \Cref{alg:tt_quantile_reg} includes all labels $y \in \calY$ where the score is at most the label specific quantile, evaluated at $(X_{n+1},y)$.

\begin{algorithm}
\caption{Test-time Quantile Regression}
\label{alg:tt_quantile_reg}
\textbf{Input:} $X_{n+1}$, $\{(X_i,Y_i)\}_{i \in [n]}$, $\calW$, and $\tS$\\

\begin{algorithmic}[1]
    \STATE For all $i \in [n+1]$, draw $\varepsilon_i$ independently from $\calD_{\text{rn}}$
    \ifarxiv
    \STATE Set 
    $$
         \calC(X_{n+1};\varepsilon_{n+1}) = \{y:\tS(X_{n+1},y, \varepsilon_{n+1}) \leq \hat{q}_y(X_{n+1},y)\},
    $$
    \else
    \STATE Set $\calC(X_{n+1};\varepsilon_{n+1}) =$
    $$
         \{y:\tS(X_{n+1},y, \varepsilon_{n+1}) \leq \hat{q}_y(X_{n+1},y)\},
    $$
    \fi
     where
     \ifarxiv
     \begin{align*}
        \hat{q}_y \in \arg \min_{w \in \calW} \frac{1}{n+1} \sum_{i=1}^n \ell_\alpha (w(X_i,Y_i),\tS(X_i,Y_i, \varepsilon_i)) 
    + \frac{1}{n+1} \ell_\alpha (w(X_{n+1},y),\tS(X_{n+1},y, \varepsilon_{n+1}))).
    \end{align*}
     \else
    \begin{align*}
        \hat{q}_y \in &\arg \min_{w \in \calW} \frac{1}{n+1} \sum_{i=1}^n \ell_\alpha (w(X_i,Y_i),\tS(X_i,Y_i, \varepsilon_i)) \\
    + &\frac{1}{n+1} \ell_\alpha (w(X_{n+1},y),\tS(X_{n+1},y, \varepsilon_{n+1}))).
    \end{align*}
    \fi
\end{algorithmic}
\textbf{Return:} $\calC(X_{n+1};\varepsilon_{n+1})$
\end{algorithm}

For the weight functions $\calW = \{\Phi(\cdot)^T \beta :\beta \in \RR^d\}$, where $\Phi : \calX \times \calY \to \RR^d$, under the same assumptions about the data and the distribution of $\tS\mid\Phi$ in \Cref{thm:jointcondcov}, the expected weighted coverage deviation is bounded by $O(d/n)$.

\begin{theorem}
    Let $\alpha$ be a parameter in $(0,1)$, and let $\calW = \{\Phi(\cdot)^T \beta :\beta \in \RR^d\}$ denote a class of linear weight functions over a basis $\Phi : \calX \times \calY \to \RR^d$. Assume that the data $\{(X_i,Y_i)\}_{i\in [n+1]}$ are drawn \mbox{i.i.d.} from a distribution $\calD$, $\{\varepsilon_i\}_{i\in [n+1]}$ are drawn \mbox{i.i.d.} from a distribution $\calD_{\text{rn}}$, independently from the dataset, and the distribution of $\tS\left(X,Y, \varepsilon\right) \mid \Phi(X,Y)$ is continuous. Then, for any $w \in \calW$, the prediction set given by \Cref{alg:tt_quantile_reg} satisfies 
    \begin{align*}
       |\EE_{D,E} [ \wcovdev(\calC, \alpha,w)]| \leq \frac{d}{n+1} \EE_{D_{+}}\left[ \max_{i \in [n+1]}|w(X_i,Y_i)|\right],
    \end{align*}
    where $D$ is the calibration dataset $\{(X_i,Y_i)\}_{i\in[n]}$, $E$ is the corresponding noise $\{\varepsilon_i\}_{i \in [n]}$ and $D_{+}$ is the full dataset $\{(X_i,Y_i)\}_{i\in[n+1]}$.
   \label{thm:tt-cond}
\end{theorem}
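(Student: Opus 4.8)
The plan is to adapt the leave-one-out symmetrization strategy of \citet{GCC2023} to joint covariate--label weight functions, and to combine it with a stationarity (KKT) analysis of the pinball objective. Throughout, fix $w = \Phi(\cdot)^T\beta \in \calW$, write $S_i \coloneqq \tS(X_i,Y_i,\varepsilon_i)$, and let $\hat q$ denote the quantile function minimizing $\frac1{n+1}\sum_{i=1}^{n+1}\ell_\alpha(w'(X_i,Y_i),S_i)$ over $w'\in\calW$ on the \emph{full} dataset $D_+$. Set $c_i \coloneqq \11\{S_i \le \hat q(X_i,Y_i)\}$.

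First I would rewrite the target as a single expectation over all $n+1$ i.i.d.\ triples $(X_i,Y_i,\varepsilon_i)$. The crucial observation is that evaluating \Cref{alg:tt_quantile_reg} at the \emph{true} label $y=Y_{n+1}$ augments the calibration set with the genuine test point, so the label-specific fit $\hat q_{Y_{n+1}}$ coincides with the symmetric full-data fit $\hat q$. Hence
\[
\EE_{D,E}[\wcovdev(\calC,\alpha,w)] = \EE\left[w(X_{n+1},Y_{n+1})\,(c_{n+1}-(1-\alpha))\right].
\]
Since the triples are i.i.d.\ and $\hat q$ is permutation-invariant, the summand $\EE[w(X_i,Y_i)(c_i-(1-\alpha))]$ is independent of $i$, so the single test term equals the average over all indices:
\[
\EE_{D,E}[\wcovdev(\calC,\alpha,w)] = \frac1{n+1}\,\EE\left[\sum_{i=1}^{n+1} w(X_i,Y_i)\,(c_i-(1-\alpha))\right].
\]

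Next I would control the inner sum deterministically for each realization. Writing $\hat q = \Phi(\cdot)^T\hat\beta$, first-order optimality for $\hat\beta$ yields subgradients $g_i \in \partial_\theta \ell_\alpha(\hat q(X_i,Y_i),S_i)$ with $\sum_{i=1}^{n+1} g_i \Phi(X_i,Y_i) = 0$; taking the inner product with $\beta$ gives $\sum_i g_i\, w(X_i,Y_i)=0$. A case check against the pinball definition shows $g_i = c_i-(1-\alpha)$ whenever $S_i \ne \hat q(X_i,Y_i)$, while at a tie index ($S_i = \hat q(X_i,Y_i)$) both $g_i$ and $c_i-(1-\alpha)=\alpha$ lie in $[-(1-\alpha),\alpha]$, so they differ by at most $1$. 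Letting $T$ be the tie set,
\[
\left|\sum_{i=1}^{n+1} w(X_i,Y_i)(c_i-(1-\alpha))\right| = \left|\sum_{i\in T}(\alpha-g_i)\,w(X_i,Y_i)\right| \le |T|\,\max_{i\in[n+1]}|w(X_i,Y_i)|.
\]

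The final---and main---obstacle is bounding $|T|\le d$. Here I would invoke the continuity of $\tS \mid \Phi$: this non-degeneracy guarantees that the pinball minimization (a linear program in $\beta\in\RR^d$) admits an optimal basic solution interpolating at most $d$ of the points almost surely, and I would choose $\hat\beta$ to be such a solution, giving $|T|\le d$. Then the inner sum is bounded by $d\,\max_i|w(X_i,Y_i)|$ for almost every realization; taking expectations and noting that $\max_i|w(X_i,Y_i)|$ depends only on $D_+$ yields the claimed $\tfrac{d}{n+1}\,\EE_{D_+}[\max_{i\in[n+1]}|w(X_i,Y_i)|]$. The delicate point is justifying $|T|\le d$ rigorously: one must use the continuity assumption to exclude LP degeneracies and argue that the stationarity condition can be realized with a subgradient vector whose discrepancy from the coverage indicators is supported on a tie set of size at most $d$.
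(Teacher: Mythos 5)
Your proposal follows essentially the same route as the paper's proof: identify $\hat q_{Y_{n+1}}$ with the symmetric fit on all $n+1$ points, use exchangeability to replace the test term by the average over all indices, invoke subgradient stationarity in the direction $w$ to reduce the sum to the tie set, and use continuity of $\tS\mid\Phi$ to bound the tie set by $d$. The one place you diverge is the step you flag as delicate: rather than selecting an optimal basic LP solution (which would require modifying the algorithm's tie-breaking and is not fully justified as stated), the paper shows that almost surely \emph{no} $\beta\in\RR^d$ can interpolate $d+1$ of the points --- a union bound over all $(d+1)$-subsets plus the observation that $(\tS_{j_1},\ldots,\tS_{j_{d+1}})$, being conditionally independent and continuous given the $\Phi$ values, falls in the at most $d$-dimensional row space of $[\Phi(X_{j_1},Y_{j_1})|\cdots|\Phi(X_{j_{d+1}},Y_{j_{d+1}})]$ with probability zero --- so $|T|\le d$ holds for whichever minimizer the algorithm returns.
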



\Cref{alg:tt_quantile_reg} is significantly more computationally expensive than Algorithms \ref{alg:quantile_reg} and \ref{alg:pred_set}, as it performs multiple quantile regressions per test point. Let  $T_{\text{qr}}$  denote the time required for a single quantile regression. In a naive implementation, computing the prediction set scales as  $O(|\mathcal{Y}| \cdot T_{\text{qr}})$. Even if for certain score and weight functions the construction of the prediction set avoids enumerating $\mathcal{Y}$, the complexity remains  $O(T_{\text{qr}})$ per test point.

\section{Experiments}
\ifarxiv
\begin{figure}[t]
    \centering
    \subfigure[\parbox{0.4\linewidth}{ACSIncome. The distribution of group average coverage deviation across 100 runs.}]
    {
    \includegraphics[height=3.2cm]{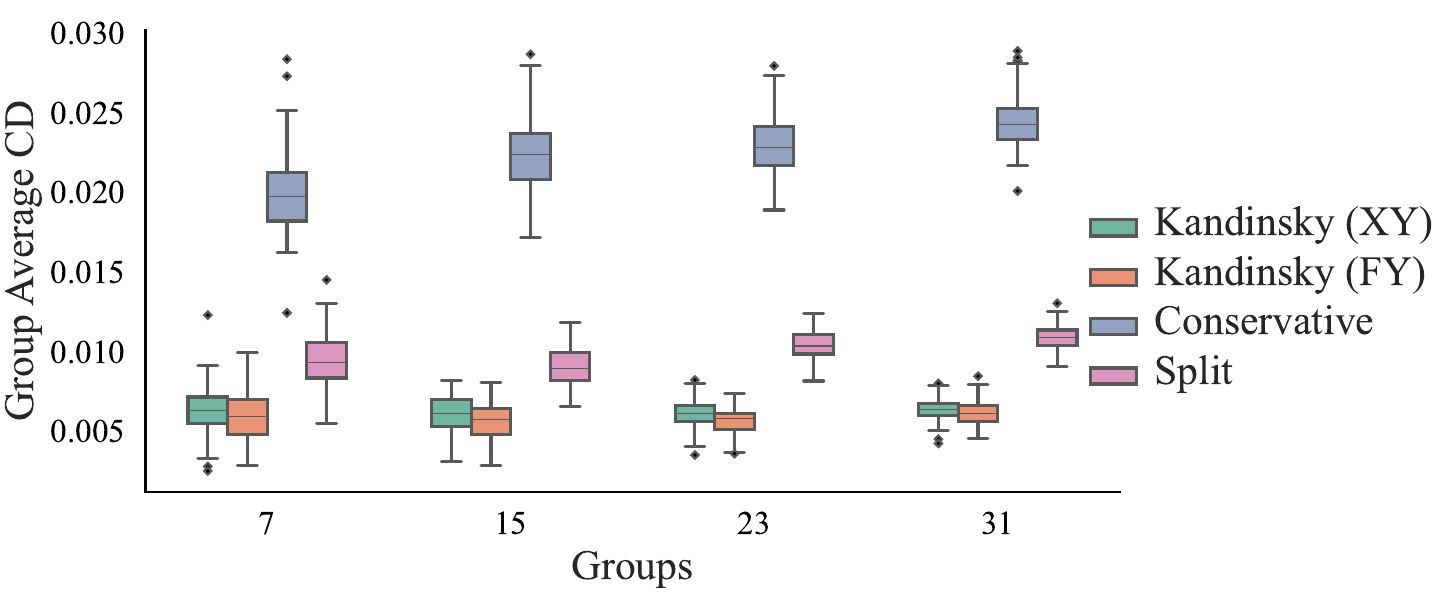}
    \label{fig:main_a}
    }
    \hspace{-0.4cm} 
    \subfigure[\parbox{0.45\linewidth}{ACSIncome. The distribution of group conditional miscoverage (mean over 100 runs) across different groups. Whiskers extend to min and max values. Target miscoverage is 0.1.}]{
        \includegraphics[height=3.2cm]{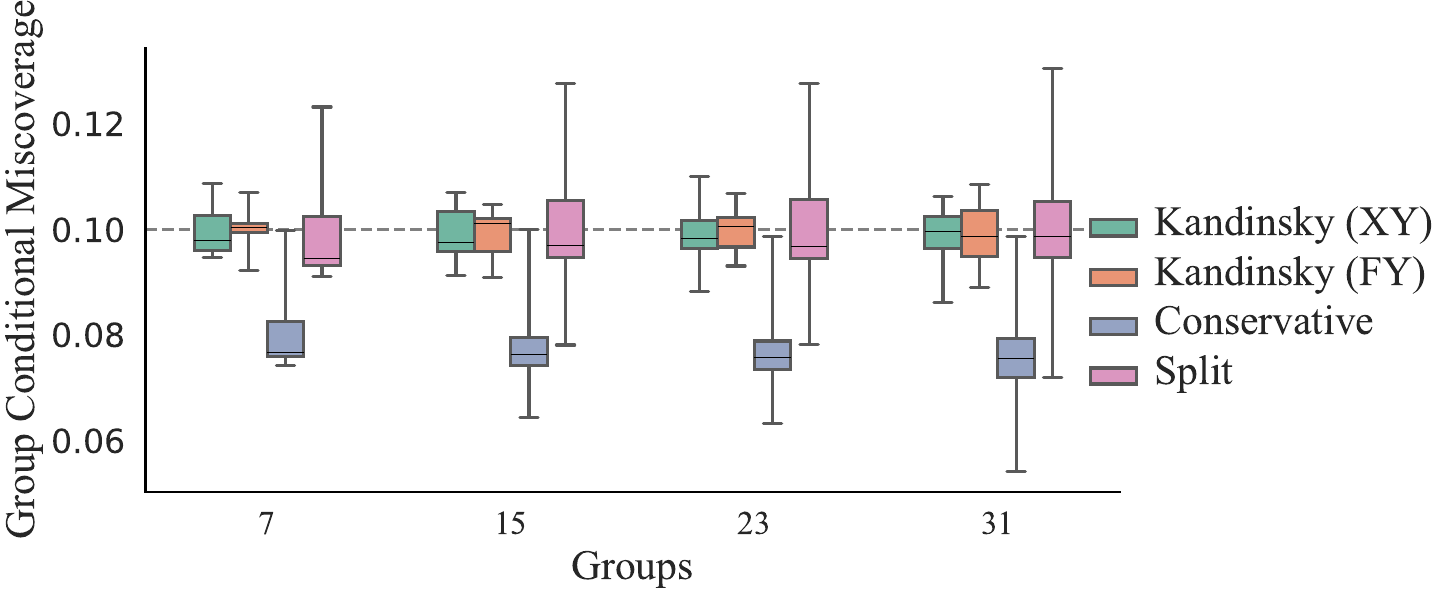}
        \label{fig:main_b}
    }
     \subfigure[\parbox{0.4\linewidth}{ACSIncome. The distribution of average prediction set sizes across 100 runs.}]{
        \includegraphics[height=3.2cm]{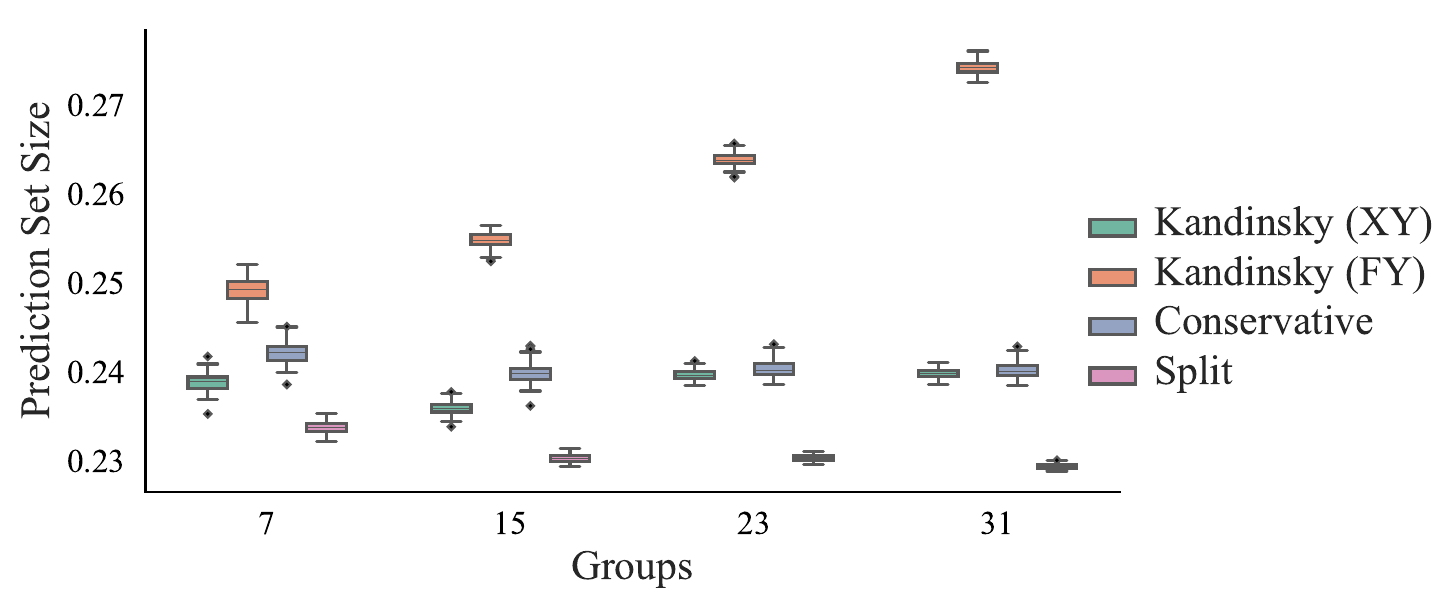}
        \label{fig:main_c}
    }
    \hspace{-0.4cm}
    \subfigure[\parbox{0.45\linewidth}{CivilComments. The distribution of group average coverage deviation across 100 runs.}]{
        \includegraphics[height=3.2cm]{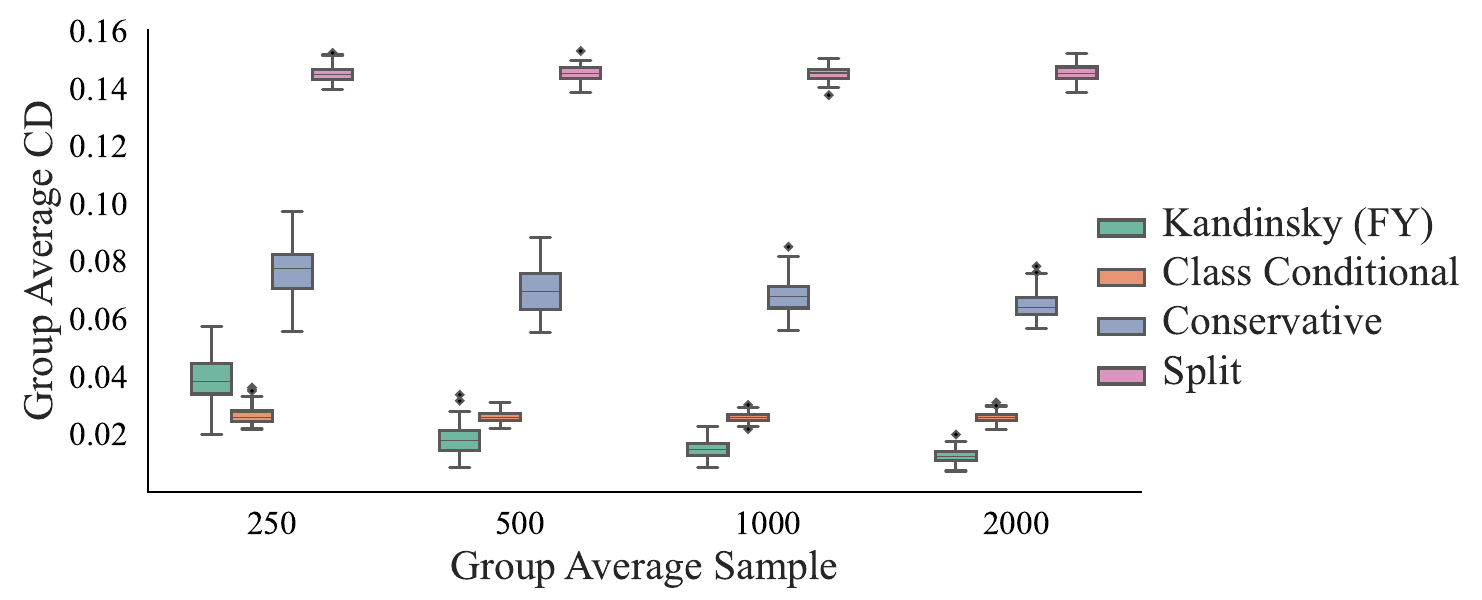}
        \label{fig:main_d}
    }
    \hspace{0cm} 
    \caption{Main results. (a)(b)(c) ACSIncome. We vary the number of groups in calibration and test sets while controlling samples per group. (d) CivilComments. We redistribute samples between calibration and test sets to vary the number of calibration samples.} 
    \label{fig:mainfigure1}
\end{figure}

\begin{figure*}[t]
\centering
\subfigure[\parbox{1\linewidth}{CivilComments. Conditional miscoverage for each group. Target miscoverage is 0.1. Error bars are standard deviations of 100 runs.}]{
        \includegraphics[height=5cm]{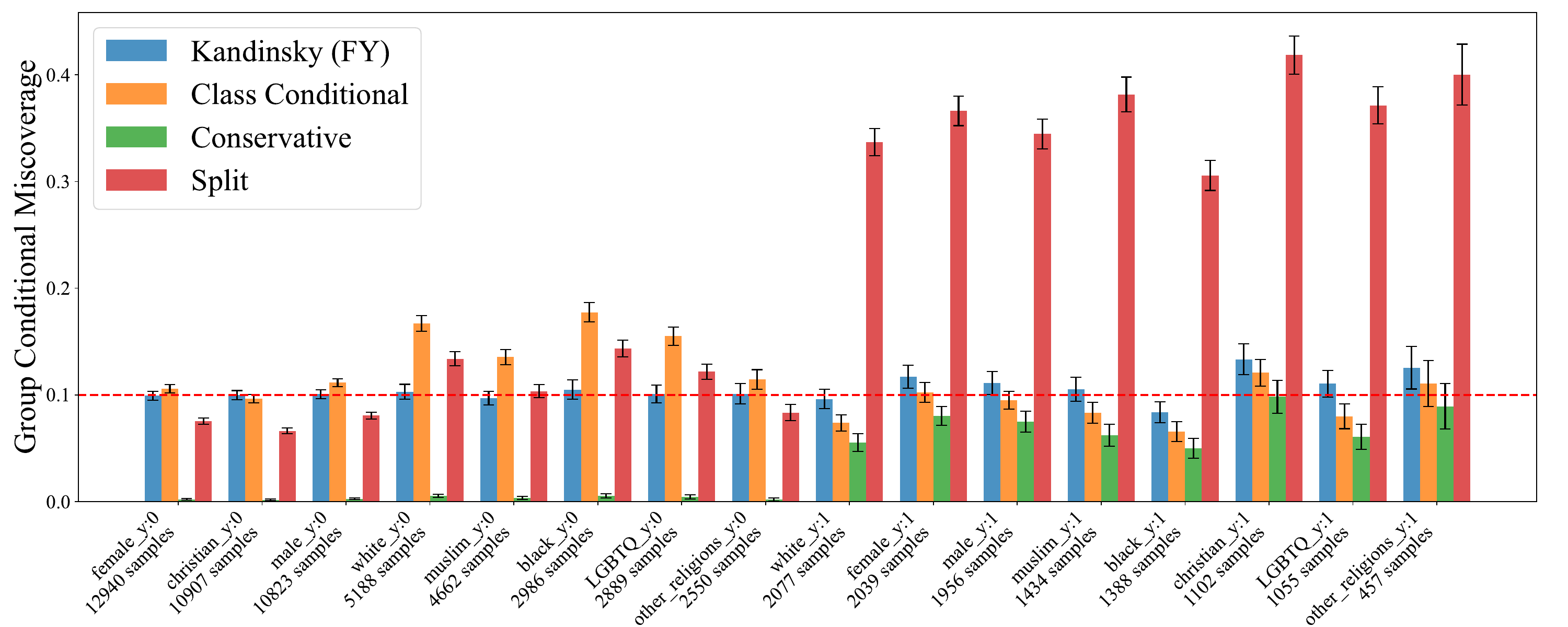}
        \label{fig:main_e}
    }
    \subfigure[\parbox{1\linewidth}{CivilComments. Average prediction set sizes for each group. Error bars are standard deviations of 100 runs.}]{
        \includegraphics[height=5cm]{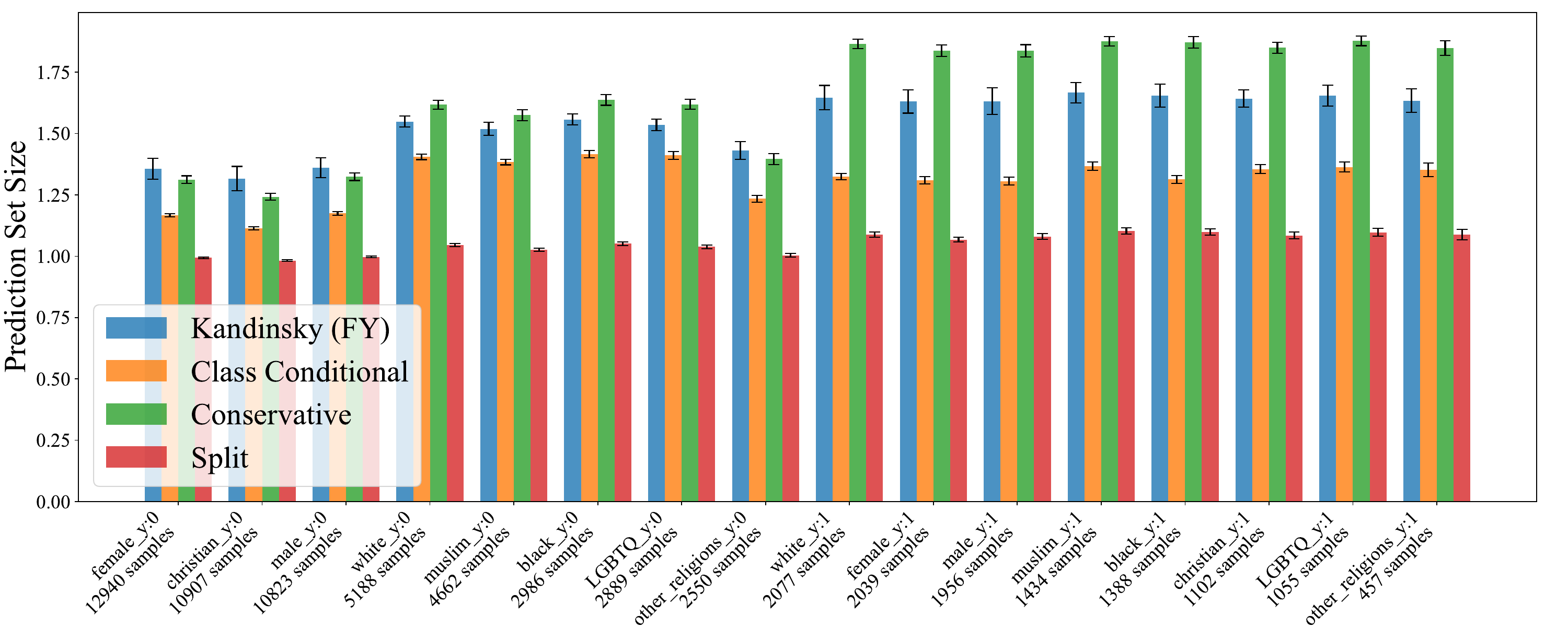}
        \label{fig:main_f}
    }
    \caption{Group-wise metrics on CivilComments. There are 4000 calibration samples per group on average. We annotate the calibration sample size for each group. Groups with positive labels (toxic) are represented by y:1.} 
    \label{fig:mainfigure2}
\end{figure*}

\begin{figure}[t]
    \centering
    \subfigure[\parbox{0.4\linewidth}{The distribution of group average coverage deviation across 10 prompts.}]
    {
    \includegraphics[height=3.2cm]{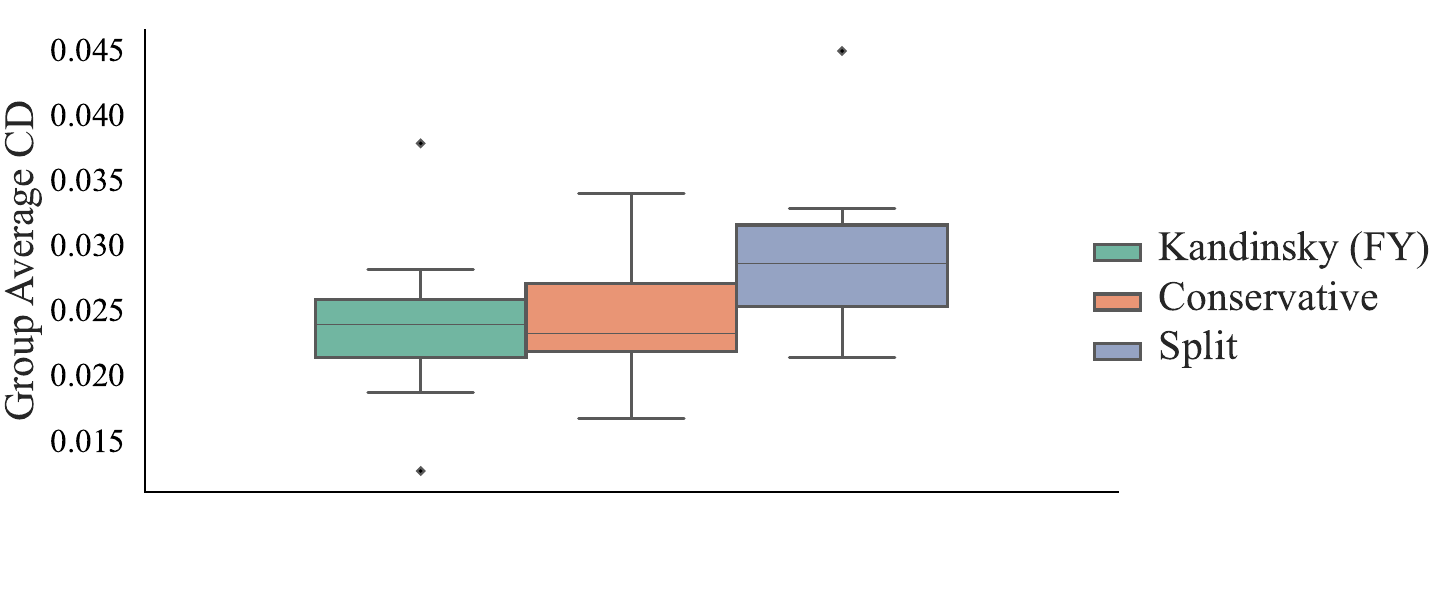}
    \label{fig:mcqa_a}
    }
    \hspace{-0.4cm} 
     \subfigure[\parbox{0.4\linewidth}{The distribution of average prediction set sizes across 10 prompts.}]{
        \includegraphics[height=3.2cm]{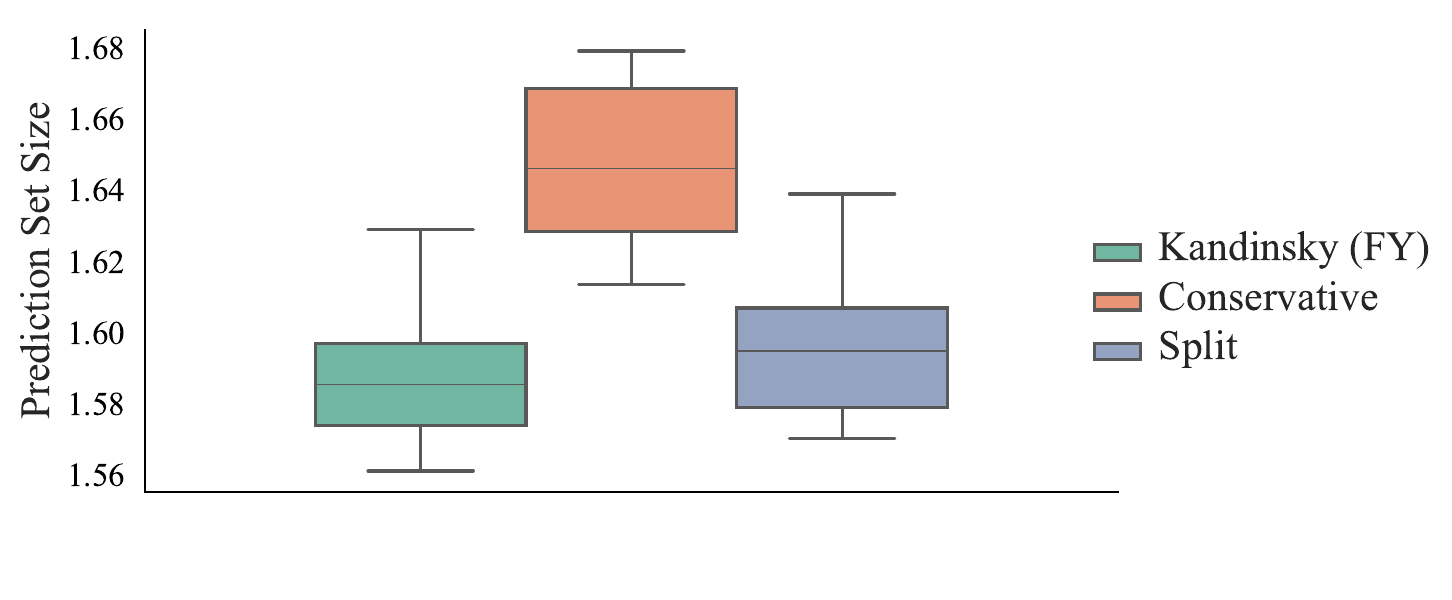}
        \label{fig:mcqa_b}
    }
    \hspace{0cm} 
    \caption{MCQA. We experiment with the prediction of the LLaMA-13B model across 10 one-shot prompts.} 
    \label{fig:mcqa}
\end{figure}
\else
\begin{figure*}[t]
    \centering
    \subfigure[\parbox{0.4\linewidth}{ACSIncome. The distribution of group average coverage deviation across 100 runs.}]
    {
    \includegraphics[height=3.4cm]{figures/acsincome_summary_deviation.pdf}
    \label{fig:main_a}
    }
    \hspace{-0.4cm} 
    \subfigure[\parbox{0.45\linewidth}{ACSIncome. The distribution of group conditional miscoverage (mean over 100 runs) across different groups. Whiskers extend to min and max values. Target miscoverage is 0.1.}]{
        \includegraphics[height=3.4cm]{figures/acsincome_summary_minmax.pdf}
        \label{fig:main_b}
    }
     \subfigure[\parbox{0.4\linewidth}{ACSIncome. The distribution of average prediction set sizes across 100 runs.}]{
        \includegraphics[height=3.4cm]{figures/acsincome_summary_size.pdf}
        \label{fig:main_c}
    }
    \hspace{-0.4cm}
    \subfigure[\parbox{0.45\linewidth}{CivilComments. The distribution of group average coverage deviation across 100 runs.}]{
        \includegraphics[height=3.4cm]{figures/civilcomments_separate_propensity_val_summary_sample_coverage.pdf}
        \label{fig:main_d}
    }
    \hspace{0cm} 
    \caption{Main results. (a)(b)(c) ACSIncome. We vary the number of groups in calibration and test sets while controlling samples per group. (d) CivilComments. We redistribute samples between calibration and test sets to vary the number of calibration samples.} 
    \label{fig:mainfigure1}
\end{figure*}

\begin{figure*}[t]
\centering
\subfigure[\parbox{1\linewidth}{CivilComments. Conditional miscoverage for each group. Target miscoverage is 0.1. Error bars are standard deviations of 100 runs.}]{
        \includegraphics[height=5cm]{figures/civilcomments_separate_propensity_val_summary_miscoverage.pdf}
        \label{fig:main_e}
    }
    \subfigure[\parbox{1\linewidth}{CivilComments. Average prediction set sizes for each group. Error bars are standard deviations of 100 runs.}]{
        \includegraphics[height=5cm]{figures/civilcomments_separate_propensity_val_summary_size.pdf}
        \label{fig:main_f}
    }
    \caption{Group-wise metrics on CivilComments. There are 4000 calibration samples per group on average. We annotate the calibration sample size for each group. Groups with positive labels (toxic) are represented by y:1.} 
    \label{fig:mainfigure2}
\end{figure*}
\fi

We empirically evaluate the conditional coverage of Kandinsky conformal prediction on real-world tasks with natural groups: income prediction across US states~\citep{DingHMS21}, toxic comment detection across demographic groups~\citep{BDSTV19,Koh21}, and multiple-choice question answering across subjects~\citep{MMLU21, BKumar23}. For the first two datasets, the data is divided into a training set for learning the base predictor, a calibration set for learning the conformal predictor, and a test set for evaluation. We repeat all experiments 100 times with reshuffled calibration and test sets. While for multile-choice question answering, we adopt LLaMA-13B~\citep{llama2023} as a blackbox base predictor.
\paragraph{Algorithms.}  We implement Kandinsky conformal prediction as described in Algorithms~\ref{alg:quantile_reg} and \ref{alg:pred_set} for probabilistic groups. According to \Cref{cor: fract-cov}, we estimate the basis function $\Phi_G$~(see \Cref{eq:Phi_XY}) using a Gradient Boosting Tree classifier. \emph{Kandinsky (XY)} uses the covariates and the label as input to learn the basis ($\phi=(X,Y)$), while \emph{Kandinsky (FY)} uses the output of the predictor and the label ($\phi=(f(X),Y)$). We compare this approach with \emph{split} conformal prediction, \emph{class-conditional} conformal prediction~\citep{LBLJ15} (where labels are discrete), and \emph{conservative} conformal prediction~\citep{BCRT21}, which estimates the $(1-\alpha)$-quantile of scores for each group and constructs sets using the score threshold determined by the maximum quantile. 
\paragraph{Metrics.} Given $m$ test samples, we measure the group-conditional miscoverage $\operatorname{M}(G, \calC)$ for a group $G$ and a prediction set $\calC$.
\begin{align*}
\operatorname{M}(G,\calC) \coloneqq \frac{\sum_{i=n+1}^{n+m} \11\sth{Y_{i} \not\in\calC(X_i,\varepsilon_i)}\cdot\11\sth{Z_i\in G}}{\sum_{i=n+1}^{n+m} \11\sth{Z_i\in G}}.
\end{align*}
We compute the group average coverage deviation $\operatorname{CD}(\calC)$.
\begin{align*}
    \operatorname{CD}(\calC) \coloneqq \frac{1}{|\calG|}\sum_{G\in\calG} \left|\operatorname{M}(G,\calC) - \alpha\right|.
\end{align*}
We also evaluate the prediction set size. For real-valued labels, we divide the label domain into 100 evenly spaced bins. We count the number of bin midpoints that fall within the prediction set and estimate its size by multiplying this number by the bin width.

\subsection{ACSIncome: A Regression Example}
We predict personal income using US Census data collected from different states, aiming to achieve the target coverage conditioned on each state. The state attribute is omitted from the covariates and labels of the test points. We investigate the scalability of the algorithms by varying the number of groups in the calibration and test set, while controlling the number of samples per group. Specifically, we sample 10,000 individuals per state to train a base Gradient Boosting Tree regressor and the basis weight functions for our method. We sample 4,000 individuals per state for calibration and 2,000 individuals for testing. The experiment is performed over 31 states, each with at least 16,000 samples. We use the Conformalized Quantile Regression score function (CQR) \citep{RPC19}. For further details, see \Cref{subsec:app_acs}.

Kandinsky conformal prediction achieves the best group average coverage deviation, as shown in \Cref{fig:main_a}. Weight functions based on predictors and labels (FY) slightly outperform those based on covariates and labels (XY), as learning a group classifier with lower-dimensional input is easier with finite samples. Both Kandinsky algorithms maintain stable average coverage deviation as the number of groups increases. \Cref{fig:main_b} shows that our approach achieves the minimum difference between the maximum and minimum miscoverage across various groups. All algorithms exhibit a larger gap with more groups, aligned with the universal bound on the maximum group coverage deviation (\Cref{cor: fract-cov}), which scales inversely with the relative proportion of subgroups. Kandinsky (XY) produces prediction sets with moderate and stable sizes for various groups numbers, while the size of Kandinsky (FY) scales with the number of groups, as shown in \Cref{fig:main_c}. This reveals a tradeoff between the computational efficiency of learning a weight function class and the effectiveness of the prediction set. 

\subsection{CivilComments: A Classification Example}
The task is to detect whether comments on online articles contain toxic content.  Demographic attributes, such as gender, religion, race, and sexual orientation, are sometimes mentioned in the comments. According to the \citet{Koh21} benchmark, there are 16 overlapping groups jointly defined by eight demographic attributes and two labels. We consider a setup where the conformal prediction algorithm has only indirect access to the training data through a given base predictor, a DistilBERT model finetuned on the training split. Kandinsky conformal prediction (XY) cannot be applied, and the weight functions of Kandinsky (FY) are trained on the calibration set. We use the randomized Adaptive Prediction Sets (APS) score function \citep{RSC20,DABJT23}. We also evaluate the algorithms’ adaptability to varying calibration sample sizes by redistributing samples between the calibration and test sets. See \Cref{subsec:app_civil} for more details.

Kandinsky conformal prediction achieves the best group average coverage deviation when the calibration set contains an average of over 500 samples per group, as shown in \Cref{fig:main_d}. Class-conditional conformal prediction achieves the best performance with 250 samples per group. However, unlike our approach, its coverage deviation does not improve with larger sample sizes. This is because Kandinsky considers the full set of groups, which are more fine-grained than classes. In \Cref{fig:main_e,fig:main_f}, we examine conditional miscoverage and prediction set sizes for each group under the setting of 4000 group average samples. Overall, Kandinsky generates prediction sets with conditional miscoverage closest to the target level for most groups while maintaining moderate sizes. We achieve the best calibrated coverage for groups with more than 2000 samples.

\subsection{Multi-Choice Question Answering: A Blackbox Example}
We adopt a setup where the model is a blackbox. We have access to the prediction of the LLaMA-13B~\citep{llama2023} model for the MMLU~\citep{MMLU21} benchmark, a multiple-choice question answering (MCQA) task. \citet{BKumar23} take a subset of MMLU, prompt the model with one-shot examples, and provide the predicted probabilities for four choices. Ten different prompts are used for the same question, producing ten parallel predictions. We take as input the output probabilities of the model as logged by \citet{BKumar23}, while assuming unavailability of both the model and the original multiple-choice questions. Therefore, Kandinsky conformal prediction (FY) is applicable to the task while Kandinsky (XY) is not. \citet{BKumar23} group the questions into 16 subjects, spanning the fields of computer science, business, and medicine. We further classify the subjects into two categories according to their difficulty, which is measured by the accuracy of the LLaMA model on those subjects as reported by \citet{BKumar23}. The basic group contains subjects with accuracy over 40\% while the remaining subjects are in the advanced group. The details are specified in \Cref{subsec:app_mcqa}. The model's confidence calibration differs between the two groups. We use the APS score function for conformal prediction. We repeat the experiment for each of the ten prompts.

Compared with split conformal prediction, Kandinsky conformal prediction achieves the desired coverage with lower deviation error per group in average, as shown in \Cref{fig:mcqa_a}. \Cref{fig:mcqa_b} shows it also produces less redundant prediction sets. While conservative conformal prediction has comparable group average coverage deviation with Kandinsky, it is achieved by overly conservative prediction sets with significantly larger sizes.

\iffalse
\begin{figure}[t]
    \centering
    \subfigure[\parbox{0.4\linewidth}{ACSIncome. The distribution of group average coverage deviation across 100 runs.}]
    {
    \includegraphics[height=3.2cm]{figures/acsincome_summary_deviation.pdf}
    \label{fig:main_a}
    }
    \hspace{-0.4cm} 
    \subfigure[\parbox{0.9\linewidth}{ACSIncome. The distribution of group conditional miscoverage (mean over 100 runs) across different groups. Whiskers extend to min and max values. Target miscoverage is 0.1.}]{
        \includegraphics[height=3.2cm]{figures/acsincome_summary_minmax.pdf}
        \label{fig:main_b}
    }
     \subfigure[\parbox{0.4\linewidth}{ACSIncome. The distribution of average prediction set sizes across 100 runs.}]{
        \includegraphics[height=3.2cm]{figures/acsincome_summary_size.pdf}
        \label{fig:main_c}
    }
    \hspace{-0.4cm}
    \subfigure[\parbox{0.9\linewidth}{CivilComments. The distribution of group average coverage deviation across 100 runs.}]{
        \includegraphics[height=3.2cm]{figures/civilcomments_separate_propensity_val_summary_sample_coverage.pdf}
        \label{fig:main_d}
    }
    \subfigure[\parbox{1\linewidth}{CivilComments. Conditional miscoverage for each group. Target miscoverage is 0.1. Error bars are standard deviations of 100 runs.}]{
        \includegraphics[height=4.5cm]{figures/civilcomments_separate_propensity_val_summary_miscoverage.pdf}
        \label{fig:main_e}
    }
    \subfigure[\parbox{1\linewidth}{CivilComments. Average prediction set sizes for each group. Error bars are standard deviations of 100 runs.}]{
        \includegraphics[height=4.5cm]{figures/civilcomments_separate_propensity_val_summary_size.pdf}
        \label{fig:main_f}
    }
    \caption{Results. (a)(b)(c) ACSIncome. We vary the number of groups in calibration and test sets while controlling samples per group. (d) CivilComments. We redistribute samples between calibration and test sets to vary the number of calibration samples. (e)(f) CivilComments. There are 4000 calibration samples per group on average. We annotate the calibration sample size for each group. Groups with positive labels (toxic) are represented by y:1.} 
    \label{fig:mainfigure}
\end{figure}
\else
\fi

\clearpage

\ifarxiv
\else
\section*{Impact Statement}
In this paper, we introduce Kandinsky Conformal Prediction, which extends conformal prediction to provide coverage guarantees across a broader class of overlapping and fractional groups. This framework addresses fundamental challenges in fairness and distribution shift, ensuring robust, well-calibrated uncertainty quantification across diverse subpopulations. By maintaining computational efficiency, our method strengthens the technical foundation for reliable predictive modeling in real-world applications.
\fi

\bibliography{references}
\bibliographystyle{plainnat}

\newpage
\appendix

\section{Proofs and Remarks from \Cref{sec:algorithm}}
In this section, we present the omitted proofs from \Cref{sec:algorithm} and provide some further discussion about the coverage and the computational complexity of Kandinsky conformal prediction.

\subsection{Proofs from \Cref{sec:algorithm}}
\label{sec: algorithm_app}

We first show in the following Lemma that for a fixed $w \in \calW$, the expected weighted coverage deviation is bounded with high probability. Note that this statement provides a bound for a single weight function.

\begin{lemma}
\label{lem:main}
    Let $\alpha$ and $\delta$ be parameters in $(0,1)$, and $\calW = \{\Phi(\cdot)^T \beta :\beta \in \RR^d\}$ denote a class of linear weight functions over a bounded basis $\Phi : \calX \times \calY \to \RR^d$. Assume that the data $\{(X_i,Y_i)\}_{i\in [n+1]}$ are drawn \mbox{i.i.d.} from a distribution $\calD$, $\{\varepsilon_i\}_{i\in [n+1]}$ are drawn \mbox{i.i.d.} from a distribution $\calD_{\text{rn}}$, independently from the dataset, and the distribution of $\tS\left(X,Y, \varepsilon\right) \mid \Phi(X,Y)$ is continuous. Then, there exists an absolute constant $C$ such that, for every $w \in \calW$, with probability at least $1-\delta$ over the randomness of the calibration dataset $\{(X_i,Y_i)\}_{i\in [n]}$ and the noise $\{\varepsilon_i\}_{i \in [n]}$, the (randomized) prediction set $\calC$, given by Algorithms \ref{alg:quantile_reg} and \ref{alg:pred_set}, satisfies
\begin{align*}
\left|\EE_{(X,Y)\sim \calD, \varepsilon \sim \calD_{\text{rn}}}\qth{w(X,Y)\pth{\11\sth{Y \in \calC(X;\varepsilon)}- (1-\alpha)}}\right| \leq 
\|w\|_{\infty} \pth{C \sqrt{\frac{d}{n}} + \frac{d}{n} +\max \{\alpha, 1-\alpha\}\sqrt{\frac{2\ln(4/\delta)}{n}} }.
\end{align*}
\end{lemma}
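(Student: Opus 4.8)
The plan is to split the population weighted coverage deviation into an \emph{empirical} part, evaluated on the very calibration data that defines $\hat q$, and a \emph{generalization} part, and to control the two by entirely different arguments. Throughout I write $\Phi_i=\Phi(X_i,Y_i)$, $s_i=\tS(X_i,Y_i,\varepsilon_i)$, let $\hat\beta$ be the coefficient vector with $\hat q(\cdot)=\Phi(\cdot)^T\hat\beta$ returned by \Cref{alg:quantile_reg}, and set $b_i=\11\{Y_i\in\calC(X_i;\varepsilon_i)\}-(1-\alpha)=\11\{s_i\le \Phi_i^T\hat\beta\}-(1-\alpha)$. Writing $w=\Phi(\cdot)^T\beta$, the target decomposes as
\[
\wcovdev(\calC,\alpha,w)=\underbrace{\EE[w(X,Y)b]-\tfrac1n\textstyle\sum_{i=1}^n w(X_i,Y_i)b_i}_{\text{(I): generalization}}+\underbrace{\tfrac1n\textstyle\sum_{i=1}^n w(X_i,Y_i)b_i}_{\text{(II): empirical deviation}}.
\]

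For term (II) I would use first-order optimality of the pinball minimization. The $\theta$-subgradient of $\ell_\alpha(\theta,s)$ equals $\alpha-\11\{s>\theta\}$ away from the tie $s=\theta$ and lies in $[-(1-\alpha),\alpha]$ at a tie, so stationarity of $\hat\beta$ yields coefficients $g_i\in[-(1-\alpha),\alpha]$ with $\frac1n\sum_i\Phi_i g_i=0$ and, crucially, $g_i=b_i$ whenever $s_i\neq\Phi_i^T\hat\beta$. Hence $\frac1n\sum_i\Phi_i b_i=\frac1n\sum_{i:\,s_i=\Phi_i^T\hat\beta}\Phi_i(b_i-g_i)$, and taking the inner product with $\beta$ together with $|w(X_i,Y_i)|\le\|w\|_\infty$ and $|b_i-g_i|\le1$ gives $|\text{(II)}|\le \frac{\|w\|_\infty}{n}\cdot\#\{i:s_i=\Phi_i^T\hat\beta\}$. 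The continuity of $\tS\mid\Phi$ ensures (almost surely) that a pinball minimizer over a $d$-dimensional linear class interpolates at most $d$ calibration scores, so $|\text{(II)}|\le \|w\|_\infty d/n$ deterministically.

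For term (I), since $\hat\beta$ is itself data-dependent I would pass to a uniform bound over the class $\calF=\{(x,y,\varepsilon)\mapsto w(x,y)(\11\{\tS(x,y,\varepsilon)\le\Phi(x,y)^T\beta\}-(1-\alpha)):\beta\in\RR^d\}$, noting that because the test point is independent of $\hat\beta$, the quantity $\EE[w(X,Y)b]$ is exactly $Pf$ at the $f\in\calF$ indexed by $\hat\beta$. The indicators $\{\11\{\tS\le\Phi^T\beta\}\}$ are halfspaces in the lifted variable $(\Phi(x,y),\tS(x,y,\varepsilon))$ and thus form a VC class of dimension $\le d+1$; multiplying by the fixed bounded weight $w$ and centering preserves this, so symmetrization together with the VC/Dudley entropy bound gives $\EE\sup_{f\in\calF}|P_nf-Pf|\le C\|w\|_\infty\sqrt{d/n}$. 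I would then convert this in-expectation bound to a high-probability one via McDiarmid's inequality: replacing one sample changes $\sup_f|P_nf-Pf|$ by at most $\frac{2}{n}\|w\|_\infty\max\{\alpha,1-\alpha\}$, since $|f|\le\|w\|_\infty\max\{\alpha,1-\alpha\}$, so with probability $1-\delta$, $|\text{(I)}|\le C\|w\|_\infty\sqrt{d/n}+\|w\|_\infty\max\{\alpha,1-\alpha\}\sqrt{2\ln(4/\delta)/n}$. Summing the bounds on (I) and (II) yields the claim.

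The main obstacle is obtaining the sharp $\sqrt{d/n}$ in (I): this is exactly where the argument must depart from \citet{RO22,JNRR2023}, who concentrate the pinball \emph{loss} and pay an extra square root (rate $(d\log n/n)^{1/4}$). Concentrating the coverage \emph{indicators} directly exploits that they form a bounded VC class, but it requires care because the index set is all of $\RR^d$ and the centering weight $w$ must be carried through the complexity estimate; the clean $\sqrt{d/n}$ hinges on the VC dimension being insensitive to $\|\beta\|$. A secondary technical point is justifying the at-most-$d$ interpolation count for the particular minimizer returned by \Cref{alg:quantile_reg}, which is precisely where the continuity assumption on $\tS\mid\Phi$ (and the randomized score that breaks ties) enters.
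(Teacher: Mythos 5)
Your proposal is correct and follows essentially the same route as the paper's proof: the same decomposition into an empirical term controlled by subgradient stationarity of the pinball minimizer plus an almost-sure bound of $d$ on the number of interpolated scores (via continuity of $\tS\mid\Phi$ and the fact that a $(d+1)$-dimensional continuous vector avoids any $d$-dimensional row space), and a generalization term controlled uniformly over $\beta\in\RR^d$ by viewing the coverage indicators as halfspaces in the lifted variable $(\Phi(x,y),-s)$, bounding the Rademacher complexity of this VC class, carrying the weight $w$ through by contraction, and concentrating via a bounded-differences argument. The only cosmetic differences are that the paper phrases stationarity as a one-dimensional directional condition along $w$ rather than the full vector condition $\frac1n\sum_i\Phi_i g_i=0$, and invokes the standard Rademacher generalization theorem rather than McDiarmid directly.
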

\begin{proof}
Denote $\tS\left(X_i,Y_i, \varepsilon_i\right)$ by $\tS_i$. Then $\{(X_i,Y_i, \tS_i)\}_{i\in [n]}$ are independent and identically distributed. 

Consider the function $L(\eta) = \frac{1}{n} \sum_{i=1}^n \ell_\alpha \left(\hat q(X_i,Y_i)+\eta w(X_i,Y_i), \tS_i\right)$, $\eta \in \RR$, which is convex because $\ell_\alpha(\cdot, \tS_i)$ is convex for each $i$.

For any $w \in \calW$, since $\hat q + \eta w \in \calW$, the subgradients of $L(\eta)$ at $\eta=0$ satisfy
\begin{align*}
    0 \in ~& \partial L(\eta) \big|_{\eta = 0} = 
     \left\{ \frac{1}{n} \sum_{i=1}^n w(X_i,Y_i) \gamma_i ~\bigg|~ \gamma_i = 
    \begin{cases} 
    \alpha - \11\{\tS_i > \hat{q}(X_i,Y_i)\}, & \hat{q}(X_i,Y_i) \neq \tS_i, \\
    \alpha_i \in [\alpha - 1, \alpha], & \hat{q}(X_i,Y_i) = \widetilde{S}_i 
    \end{cases} \right\}.
\end{align*}
Therefore, there exist $\alpha_i^\star \in [\alpha-1,\alpha]$, $i\in[n]$ such that
\begin{align*}
    \frac{1}{n} \sum_{\hat q(X_i,Y_i)\neq\tS_i} w(X_i,Y_i)\pth{\alpha-\11\sth{\tS_i > \hat q(X_i,Y_i)}} +
    \frac{1}{n} \sum_{\hat q(X_i,Y_i)=\tS_i} \alpha_i^\star w(X_i,Y_i) = 0.
\end{align*}
This implies
\begin{align}
    \label{eq:thm_jointcondcov_app1}
    \left| \frac{1}{n}\sum_{i=1}^n w(X_i,Y_i)\pth{\alpha-\11\sth{\tS_i > \hat q(X_i,Y_i)}} \right| &= 
    \left| \frac{1}{n} \sum_{\hat q(X_i.Y_i)=\tS_i} w(X_i,Y_i) \pth{\alpha - \alpha_i^\star}\right|  \\
    \label{eq:thm_jointcondcov_app2}
    &\leq \frac{1}{n} \sum_{i=1}^n \11 \sth{\hat q(X_i.Y_i)=\tS_i} |w(X_i,Y_i) |.
\end{align}
For the left hand side of \Cref{eq:thm_jointcondcov_app1}, we consider the function classes
\begin{align*}
    \calH_1 &= \sth{h_1(x,y,s) = \11\sth{ q(x,y) - s<0} \Bigm|  q\in\calW} = \sth{h_1(x,y,s) = \11\sth{ \Phi(x,y)^T\beta - s<0} \Bigm|  \beta\in\RR^d}, \\
    \calH_2 &= \sth{h_2(x,y,s) = w(x,y)(\alpha - h_1(x,y,s)) \Bigm| h_1 \in \calH_1}. \\
\end{align*}
Under the mapping $(x,y,s) \mapsto 
\begin{bmatrix}
    \Phi(x,y) \\
    -s
\end{bmatrix} \in \RR^{d+1}$,
$\calH_1$ is the subset of all homogeneous halfspaces in $\RR_{d+1}$, with the normal vector $\begin{bmatrix}
    \beta \\
    1
\end{bmatrix}$. Therefore, the VC-dimension of $\calH_1$ is at most $d+1$. 

For samples $D = \{(X_i,Y_i, \tS_i)\}_{i\in [n]}$, the empirical Rademacher complexity of a function class $\calH$ restricted to $D$ is defined as
\begin{align*}
    \calR(\calH \circ D) =\frac{1}{n} \EE_{\bsigma}\qth{\sup_{h\in\calH} \sum_{i=1}^n \sigma_i h(X_i,Y_i,\tS_i)},
\end{align*}
where random variables in $\bsigma$ are independent and identically distributed according to $\PP[\sigma_i=1] = \PP[\sigma_i=-1] = \frac{1}{2}$.

\citet{WW19}~(Example 5.24) gives an upper bound of the Rademacher complexity for VC classes. There exists an absolute constant $C'$, such that
\begin{align*}
    \calR(\calH_1 \circ D) \leq \frac{1}{n} \EE_{\bsigma}\qth{\sup_{h\in\calH_1} \left|\sum_{i=1}^n \sigma_i h(X_i,Y_i,\tS_i)\right| } \leq C' \sqrt {\frac{d+1}{n}} \leq C' \sqrt {\frac{2d}{n}}.
\end{align*}
Next, we compute the empirical Rademacher complexity of $\calH_2$ restricted to $D$. For any $h, h' \in \calH_1$, 
\begin{align*}
    \left| w(x,y)(\alpha-h(x,y,s)) - w(x,y)(\alpha-h'(x,y,s)) \right| \leq \|w\|_\infty \left|h(x,y,s) - h'(x,y,s) \right|.
\end{align*}
According to the contraction lemma (Lemma~26.9 in \citet{SL14}), 
\begin{align*}
    \calR(\calH_2 \circ D) \leq \|w\|_\infty \calR(\calH_1\circ D) \leq C'\|w\|_\infty \sqrt {\frac{2d}{n}}.
\end{align*}
Therefore, we have a universal generalization bound for the functions in $\calH_2$. The functions in $\calH_2$ are bounded by $\max\sth{\alpha, 1-\alpha} \|w\|_\infty$. According to Theorem~26.5 in \citet{SL14}, with probability at least $1-\delta$ over the randomness in $D = \{(X_i,Y_i, \tS_i)\}_{i\in [n]}$, for all $h\in\calH_2$,
\begin{align*}
    \EE\qth{h(X, Y, \tS)} - \frac{1}{n} \sum_{i=1}^n h(X_i, Y_i, \tS_i) &\leq  2\EE_D\qth{\calR(\calH_2\circ D)} + \max\sth{\alpha, 1-\alpha} \|w\|_\infty  \sqrt{\frac{2 \ln(2/\delta)}{n}} \\
    &\leq 2C'\|w\|_\infty \sqrt {\frac{2d}{n}} + \max\sth{\alpha, 1-\alpha} \|w\|_\infty  \sqrt{\frac{2 \ln(2/\delta)}{n}}.
\end{align*}
Symmetrically, we can replace $h$ with $-h$ in the class $\calH_2$. Since $\calR(\calH_2\circ D) = \calR(-\calH_2\circ D)$, with probability at least $1-\delta$, for all $h\in\calH_2$,
\begin{align*}
       \frac{1}{n} \sum_{i=1}^n h(X_i, Y_i, \tS_i) - \EE\qth{h(X, Y, \tS)}
    \leq 2C'\|w\|_\infty \sqrt {\frac{2d}{n}} + \max\sth{\alpha, 1-\alpha} \|w\|_\infty  \sqrt{\frac{2 \ln(2/\delta)}{n}}.
\end{align*}
Taking the union bound, with probability at least $1-\delta$ over the randomness in $D$, for all $h\in\calH_2$,
\begin{align*}
    \left|\EE\qth{h(X, Y, \tS)} - \frac{1}{n} \sum_{i=1}^n h(X_i, Y_i, \tS_i) \right| \leq 2C'\|w\|_\infty \sqrt {\frac{2d}{n}} + \max\sth{\alpha, 1-\alpha} \|w\|_\infty  \sqrt{\frac{2 \ln(4/\delta)}{n}}.
\end{align*}
By expanding functions in $\calH_2$ as $h(x,y,s) = w(x,y)\pth{\alpha - \11\sth{q(x,y)-s < 0}}$ where $q\in\calW$, with probability at least $1-\delta$ over the randomness in $D$, for all $q\in\calW$, 
\begin{align*}
    &\left| \EE \qth{w(X,Y) \pth{\11\sth{\tS\leq q(X,Y) } - (1-\alpha) }} - \frac{1}{n} \sum_{i=1}^n w(X_i,Y_i) \pth{\alpha - \11\sth{q(X_i,Y_i)- \tS_i < 0}} \right| = \\ 
    &\left| \EE \qth{w(X,Y) \pth{\alpha - \11\sth{q(X,Y) - \tS < 0}  }} - \frac{1}{n} \sum_{i=1}^n w(X_i,Y_i) \pth{\alpha - \11\sth{q(X_i,Y_i)- \tS_i < 0}} \right| \leq \\
    &2C'\|w\|_\infty \sqrt {\frac{2d}{n}} + \max\sth{\alpha, 1-\alpha} \|w\|_\infty  \sqrt{\frac{2 \ln(4/\delta)}{n}}.
\end{align*}
In particular, this inequality holds for $\hat q$. By plugging in \Cref{eq:thm_jointcondcov_app1} and \Cref{eq:thm_jointcondcov_app2},
\begin{align}
\label{eq:thm_jointcondcov_app3}
\begin{split}
     &\left| \EE_{(X,Y) \sim\calD, \varepsilon \sim \calD_{\text{rn}}} \qth{w(X,Y) \pth{\11\sth{\tS \leq 
    \hat q(X,Y) } - (1-\alpha) }} \right| \leq \\
    &\left| \frac{1}{n} \sum_{i=1}^n w(X_i,Y_i) \pth{\alpha - \11\sth{\hat q(X_i,Y_i)- \tS_i < 0}} \right| + 2C'\|w\|_\infty \sqrt {\frac{2d}{n}} + \max\sth{\alpha, 1-\alpha} \|w\|_\infty  \sqrt{\frac{2 \ln(4/\delta)}{n}} \leq \\
    &\frac{1}{n} \sum_{i=1}^n \11 \sth{\hat q(X_i.Y_i)=\tS_i} |w(X_i,Y_i) | + 2C'\|w\|_\infty \sqrt {\frac{2d}{n}} + \max\sth{\alpha, 1-\alpha} \|w\|_\infty  \sqrt{\frac{2 \ln(4/\delta)}{n}} \leq \\
    &\frac{1}{n}\|w\|_\infty \sum_{i=1}^n \11 \sth{\hat q(X_i.Y_i)=\tS_i} + 2C'\|w\|_\infty \sqrt {\frac{2d}{n}} + \max\sth{\alpha, 1-\alpha} \|w\|_\infty  \sqrt{\frac{2 \ln(4/\delta)}{n}}.
\end{split}
\end{align}
Theorem~3 in \citet{GCC2023} provides an upper bound for a quantity similar to $\sum_{i=1}^n \11 \sth{\hat q(X_i.Y_i)=\tS_i}$, under a different assumption that the distribution of $\tS \mid X$ is continuous. We will follow a similar proof technique, but we assume that the distribution of $\tS \mid \Phi$ is continuous.

There exists $\hbeta \in \RR^d$ such that $\hat q = \Phi(\cdot)^T \hbeta $.
\begin{align*}
    &\PP\qth{ \sum_{i=1}^n \11 \sth{\hat q(X_i.Y_i)=\tS_i}  \geq d+1} = \\
    &\EE\qth{\PP\qth{ \sum_{i=1}^n \11 \sth{\Phi(X_i,Y_i)^T\hbeta=\tS_i}  \geq d+1 \Biggm| \Phi(X_1,Y_1),...,\Phi(X_n,Y_n) }} = \\
    &\EE\qth{\PP\qth{ \exists~1 \leq j_1 \leq ... \leq j_{d+1} \leq n, \forall~1\leq k \leq d+1, \Phi(X_{j_k},Y_{j_k})^T\hbeta=\tS_{j_k} \Biggm| \Phi(X_1,Y_1),...,\Phi(X_n,Y_n) }} \leq \\
    &\EE\qth{ \sum_{1 \leq j_1 < ... < j_{d+1} \leq n} \PP\qth{ \exists~\beta\in\RR^d, \forall~1\leq k \leq d+1, \Phi(X_{j_k},Y_{j_k})^T\beta=\tS_{j_k} \Biggm| \Phi(X_1,Y_1),...,\Phi(X_n,Y_n) } } \leq \\
    &\EE\left[ \sum_{1 \leq j_1 < ... < j_{d+1} \leq n}  \PP\Bigg[ \pth{\tS_{j_1}, ..., \tS_{j_{d+1}} } \in \operatorname{RowSpace}\pth{\qth{\Phi(X_{j_1},Y_{j_1}),...,\Phi(X_{j_{d+1}},Y_{j_{d+1}})}} \right.\\
    &\left. \hspace{4.5in} \Biggm| \Phi(X_1,Y_1),...,\Phi(X_n,Y_n) \Bigg] \right] = \\
    &0.
\end{align*}
The last equation holds because the distribution of $\pth{\tS_{j_1}, ..., \tS_{j_{d+1}} } \Bigm| \Phi(X_1,Y_1),...,\Phi(X_n,Y_n)$ is continuous, and $\operatorname{RowSpace}\pth{\qth{\Phi(X_{j_1},Y_{j_1}),...,\Phi(X_{j_{d+1}},Y_{j_{d+1}})}}$ is at most $d-$dimensional while $\pth{\tS_{j_1}, ..., \tS_{j_{d+1}} }$ is a $(d+1)-$dimensional vector. Therefore, with probability 1,
\begin{align*}
    \sum_{i=1}^n \11 \sth{\hat q(X_i.Y_i)=\tS_i} \leq d.
\end{align*}
Plugging the inequality into \Cref{eq:thm_jointcondcov_app3}, with probability at least $1-\delta$ over the randomness in $D$,
\begin{align*}
    &\left| \EE_{(X,Y) \sim\calD, \varepsilon \sim \calD_{\text{rn}}} \qth{w(X,Y) \pth{\11\sth{\tS \leq 
    \hat q(X,Y) } - (1-\alpha) }} \right| \leq \\
    &\|w\|_\infty \pth{\frac{d}{n} + 2C' \sqrt {\frac{2d}{n}} + \max\sth{\alpha, 1-\alpha}  \sqrt{\frac{2 \ln(4/\delta)}{n}}}.
\end{align*}
\end{proof}

Using Lemma~\ref{lem:main} and a union bound, we prove a high probability bound on the expected coverage deviation for all weight functions in $\calW$.

\begin{customthm}{\ref*{thm:jointcondcov}}[Restated]
    Let $\alpha$ and $\delta$ be parameters in $(0,1)$, and $\calW = \{\Phi(\cdot)^T \beta :\beta \in \RR^d\}$ denote a class of linear weight functions over a bounded basis $\Phi : \calX \times \calY \to \RR^d$. Without loss of generality, assume $\|\Phi(x,y)\|_\infty\leq1$ for all $x,y$. Assume that the data $\{(X_i,Y_i)\}_{i\in [n]}$ are drawn \mbox{i.i.d.} from a distribution $\calD$, $\{\varepsilon_i\}_{i\in [n]}$ are drawn \mbox{i.i.d.} from a distribution $\calD_{\text{rn}}$, independently from the dataset, and the distribution of $\tS\left(X,Y, \varepsilon\right) \mid \Phi(X,Y)$ is continuous. Then, there exists an absolute constant $C$ such that, with probability at least $1-\delta$ over the randomness of the calibration dataset $\{(X_i,Y_i)\}_{i\in [n]}$ and the noise $\{\varepsilon_i\}_{i \in [n]}$, the (randomized) prediction set $\calC$, given by Algorithms \ref{alg:quantile_reg} and \ref{alg:pred_set}, satisfies
\begin{align*}
&\left|\EE_{(X,Y)\sim \calD, \varepsilon \sim \calD_{\text{rn}}}\qth{w_\beta(X,Y)\pth{\11\sth{Y \in \calC(X;\varepsilon)}- (1-\alpha)}}\right| \leq 
\|\beta\|_{1} \pth{C \sqrt{\frac{d}{n}} + \frac{d}{n} +\max \{\alpha, 1-\alpha\}\sqrt{\frac{2\ln(4d/\delta)}{n}} },
\end{align*}
for every $w_\beta= \Phi(\cdot)^T\beta$.
\end{customthm}
\begin{proof}
Let $\Phi(\cdot) = (\phi_1(\cdot),\cdots,\phi_d(\cdot))^T$, where $\phi_j\in\mathbb R^{\calX \times \calY}$ for $1\leq j\leq d$. According to \Cref{lem:main}, for all $1\leq j \leq d$, with probability at least $1-\delta/d$,
\begin{align*}
&\left|\EE_{(X,Y)\sim \calD, \varepsilon \sim \calD_{\text{rn}}}\qth{\phi_j(X,Y)\pth{\11\sth{Y \in \calC(X;\varepsilon)}- (1-\alpha)}}\right| \leq 
 C \sqrt{\frac{d}{n}} + \frac{d}{n} +\max \{\alpha, 1-\alpha\}\sqrt{\frac{2\ln(4d/\delta)}{n}} .
\end{align*}
Then, we have the union bound. With probability at least $1-\delta$, for all $1\leq j \leq d$,
\begin{align*}
&\left|\EE_{(X,Y)\sim \calD, \varepsilon \sim \calD_{\text{rn}}}\qth{\phi_j(X,Y)\pth{\11\sth{Y \in \calC(X;\varepsilon)}- (1-\alpha)}}\right| \leq 
 C \sqrt{\frac{d}{n}} + \frac{d}{n} +\max \{\alpha, 1-\alpha\}\sqrt{\frac{2\ln(4d/\delta)}{n}} .
\end{align*}
For any $w_\beta = \Phi(\cdot)^T\beta$,
\begin{align*}
    &\left|\EE_{(X,Y)\sim \calD, \varepsilon \sim \calD_{\text{rn}}}\qth{w_\beta(X,Y)\pth{\11\sth{Y \in \calC(X;\varepsilon)}- (1-\alpha)}}\right| \leq \\
    &{} \sum_{j=1}^d \left|\EE_{(X,Y)\sim \calD, \varepsilon \sim \calD_{\text{rn}}}\qth{\beta_j \phi_j(X,Y)\pth{\11\sth{Y \in \calC(X;\varepsilon)}- (1-\alpha)}}\right| \leq \\
&{} 
\sum_{j=1}^d \abs{\beta_{j}} \cdot  \pth{C \sqrt{\frac{d}{n}} + \frac{d}{n} +\max \{\alpha, 1-\alpha\}\sqrt{\frac{2\ln(4d/\delta)}{n}} }  = \\
&{}\|\beta\|_{1} \pth{C \sqrt{\frac{d}{n}} + \frac{d}{n} +\max \{\alpha, 1-\alpha\}\sqrt{\frac{2\ln(4d/\delta)}{n}} }.
\end{align*}
\end{proof}

\begin{customcor}{\ref*{cor: expected-cov}}[Restated]
Let $\alpha, \delta, \calW,w_\beta$ be specified as in \Cref{thm:jointcondcov}. Assume that $\{(X_i,Y_i)\}_{i\in [n+1]}$ are drawn \mbox{i.i.d.} from a distribution $\calD$, $\{\varepsilon_i\}_{i\in [n+1]}$ are drawn \mbox{i.i.d.} from a distribution $\calD_{\text{rn}}$, independently from the dataset, and the distribution of $\tS\left(X,Y, \varepsilon\right) \mid \Phi(X,Y)$ is continuous. Then, there exists an absolute constant $C$ such that the (randomized) prediction set $\calC$, given by Algorithms \ref{alg:quantile_reg} and \ref{alg:pred_set}, satisfies
\begin{align*}
\EE_{D,E}  \left[ \sup_{w_\beta\in\calW} \EE_{(X,Y) \sim \calD, \varepsilon \sim \calD_{\text{rn}}}\qth{\frac{w_\beta(X,Y)}{\|\beta\|_{1}}\pth{\11\sth{Y \in \calC(X;\varepsilon)}- (1-\alpha)}}\right] \leq 
C \sqrt{\frac{d}{n}} + \frac{d}{n},
\end{align*}

where $D$ is the calibration dataset $\{(X_i,Y_i)\}_{i\in [n]}$ and $E$ is the corresponding noise $\{\varepsilon_i\}_{i\in [n]}$.
\end{customcor}
\begin{proof}
According to \Cref{thm:jointcondcov}, there exists an absolute constant $C'$ such that, with probability at least $1-\delta$ over the randomness in $D$ and $E$, 
\begin{align*}
    &\sup_{w_\beta\in\calW} \EE_{(X,Y) \sim \calD, \varepsilon \sim \calD_{\text{rn}}}\qth{\frac{w_\beta(X,Y)}{\|\beta\|_{1}}\pth{\11\sth{Y \in \calC(X;\varepsilon)}- (1-\alpha)}} \leq \\
    &\sup_{w_\beta\in\calW} \left|\EE_{(X,Y) \sim \calD, \varepsilon \sim \calD_{\text{rn}}}\qth{\frac{w_\beta(X,Y)}{\|\beta\|_{1}}\pth{\11\sth{Y \in \calC(X;\varepsilon)}- (1-\alpha)}} \right| \leq \\
    & C' \sqrt{\frac{d}{n}} + \frac{d}{n} +\max \{\alpha, 1-\alpha\}\sqrt{\frac{2\ln(4d/\delta)}{n}} \leq \\
    & C' \sqrt{\frac{d}{n}} + \frac{d}{n} +\max \{\alpha, 1-\alpha\}\sqrt{\frac{2\ln d}{n}} + \max \{\alpha, 1-\alpha\}\sqrt{\frac{2\ln(4/\delta)}{n}}.
\end{align*}
Let the random variable $M_{D,E} = \sup_{w_\beta\in\calW} \EE_{(X,Y) \sim \calD, \varepsilon \sim \calD_{\text{rn}}}\qth{\frac{w_\beta(X,Y)}{\|\beta\|_{1}}\pth{\11\sth{Y \in \calC(X;\varepsilon)}- (1-\alpha)}}$. We have 
\begin{align*}
    \PP_{D,E}\left[ M_{D,E} >  C' \sqrt{\frac{d}{n}} + \frac{d}{n} +\max \{\alpha, 1-\alpha\}\sqrt{\frac{2\ln d}{n}} + \max \{\alpha, 1-\alpha\}\sqrt{\frac{2\ln(4/\delta)}{n}}  \right] \leq \delta.
\end{align*}
By inverting the bound, for $t>C' \sqrt{\frac{d}{n}} + \frac{d}{n} + \max \{\alpha, 1-\alpha\}\sqrt{\frac{2\ln d}{n}}$,
\begin{align*}
    \PP_{D,E}\qth{M_{D,E} > t} \leq  4 \exp\left[-\frac{n \big(t - C' \sqrt{\frac{d}{n}} - \frac{d}{n} - \max \{\alpha, 1-\alpha\}\sqrt{\frac{2\ln d}{n}}  \big)^2}{2 \max\{\alpha, 1 - \alpha\}^2}\right].
\end{align*}
Let $I = C' \sqrt{\frac{d}{n}} + \frac{d}{n}+\max \{\alpha, 1-\alpha\}\sqrt{\frac{2\ln d}{n}}$. Since $M_{D,E} \geq 0$, 
\begin{align*}
    \EE_{D,E}\qth{M_{D,E}} &= \int_{t=0}^{+\infty} \PP_{D,E}\qth{M_{D,E}>t} \mathrm{d}t \\
    &= \int_{t=0}^{I } \PP_{D,E}[M_{D,E} > t] \, \mathrm{d}t + \int_{t=I}^{+\infty} \PP_{D,E}[M_{D,E} > t] \, \mathrm{d}t \\
    &\leq C' \sqrt{\frac{d}{n}} + \frac{d}{n} +\max \{\alpha, 1-\alpha\}\sqrt{\frac{2\ln d}{n}} + \int_{t=I}^{+\infty} \PP_{D,E}[M_{D,E} > t]\, \mathrm{d}t \\
    &\leq  C' \sqrt{\frac{d}{n}} + \frac{d}{n} +\max \{\alpha, 1-\alpha\}\sqrt{\frac{2\ln d}{n}} + \int_{t=I}^{+\infty} 4 \exp\left[-\frac{n \big(t - I  \big)^2}{2 \max\{\alpha, 1 - \alpha\}^2}\right] \, \mathrm{d}t \\
    &= C' \sqrt{\frac{d}{n}} + \frac{d}{n} +\max \{\alpha, 1-\alpha\}\sqrt{\frac{2\ln d}{n}}+ \max\{\alpha, 1 - \alpha\} \sqrt{\frac{8\pi}{n}} \\
    &\leq \pth{C' + \sqrt{2} +\sqrt{8\pi}  }  \sqrt{\frac{d}{n}} + \frac{d}{n}.
\end{align*}
\end{proof}

\begin{customcor}{\ref*{cor: fract-cov}}[Restated]
Let $\alpha, \delta \in (0,1)$, $\phi(X,Y)$ be a sufficient statistic for $\tS$, such that $\tS$ is conditionally independent of $X,Y$ given $\phi$, and
    $
        \calW = \{\sum_{G \in \calG} \beta_G \Phi_G: \beta_G \in \RR, \forall~G\in\calG\}
    $
    . Assume that the data $\{(X_i,Y_i)\}_{i\in [n+1]}$ are drawn \mbox{i.i.d.} from a distribution $\calD$, $\{\varepsilon_i\}_{i\in [n+1]}$ are drawn \mbox{i.i.d.} from a distribution $\calD_{\text{rn}}$, independently from the dataset, and the distribution of $\tS\left(X,Y, \varepsilon\right) \mid \Phi(X,Y)$ is continuous. There exists an absolute constant $C$ such that, with probability at least $1-\delta$ over the randomness of the calibration dataset $\{(X_i,Y_i)\}_{i\in [n]}$ and the noise $\{\varepsilon_i\}_{i \in [n]}$, the (randomized) prediction set $\calC$ given by Algorithms \ref{alg:quantile_reg} and \ref{alg:pred_set} satisfies, for all $G \in \calG$,
    \begin{align*}
    &\left|\PP\qth{Y_{n+1} \in\calC(X_{n+1};\varepsilon_{n+1}) \mid Z_{n+1}\in G, \{X_i,Y_i,\varepsilon_i\}_{i \in [n]} } -(1-\alpha)\right| \leq \\
    &\frac{1}{\PP[Z_{n+1}\in G]} 
     \pth{C\sqrt{\frac{|\calG|}{n}}+\frac{|\calG|}{n}+\max \{\alpha, 1-\alpha\}\sqrt{\frac{2\ln(4|\calG|/\delta)}{n}} }.
     \end{align*}
\end{customcor}
\begin{proof}
Since $\tS$ is conditionally independent of $X,Y$ given $\phi$, more formally $\tS \perp\!\!\!\perp X,Y \mid \phi(X,Y)$, and $\varepsilon \perp\!\!\!\perp X,Y$, we have $\tS, \varepsilon \perp\!\!\!\perp X,Y \mid \phi(X,Y)$. Therefore $\tS \perp\!\!\!\perp X,Y \mid \phi(X,Y), \varepsilon$. This implies for any $x_1,y_1,x_2,y_2$ such that $\phi(x_1,y_1) = \phi(x_2,y_2)$, we have $\tS(x_1,y_1,\varepsilon) = \tS(x_2,y_2,\varepsilon)$. Therefore, $\tS(X,Y,\varepsilon)$ is measurable w.r.t. $\phi(X,Y),\varepsilon$. From the definition of $\calW$, any $w(X,Y)$ with $w\in\calW$ is also measurable w.r.t. $\phi(X,Y),\varepsilon$.

According to \Cref{thm:jointcondcov}, there exists an absolute constant $C$ such that, with probability at least $1-\delta$ over the randomness in $\{(X_i,Y_i)\}_{i\in [n]}$ and $\{\varepsilon_i\}_{i \in [n]}$, for every $w_\beta \in \calW$, 
\begin{align*}
&\left|\EE_{X,Y,\varepsilon}\qth{w_\beta(X,Y)\pth{\11\sth{ \tS(X,Y,\varepsilon) \leq \hat q (X,Y) }- (1-\alpha)}}\right| = \\
&\left|\EE_{X,Y, \varepsilon}\qth{w_\beta(X,Y)\pth{\11\sth{Y \in \calC(X;\varepsilon)}- (1-\alpha)}}\right| \leq \\
&{} 
\|\beta\|_{1} \pth{C \sqrt{\frac{d}{n}} + \frac{d}{n} +\max \{\alpha, 1-\alpha\}\sqrt{\frac{2\ln(4d/\delta)}{n}} } = \\
&\|\beta\|_{1} \pth{C \sqrt{\frac{|\calG|}{n}} + \frac{|\calG|}{n} +\max \{\alpha, 1-\alpha\}\sqrt{\frac{2\ln(4|\calG|/\delta)}{n}} }.
\end{align*}
For any $G \in \calG$, by taking $w_\beta(x,y) = \frac{1}{\PP[Z\in G]}\PP[Z \in G \mid \phi(X,Y)=\phi(x,y)]$, we have $\|\beta\|_{1}  = \frac{1}{\PP[Z_{n+1}\in G]}$. Since $\tS(X,Y,\varepsilon)$, $\hat q (X,Y)$ are measurable w.r.t. $\phi(X,Y), \varepsilon$, by Bayes formula,
\begin{align*} 
    &\EE_{X,Y, \varepsilon}\qth{w_\beta(X,Y)\pth{\11\sth{ \tS(X,Y,\varepsilon) \leq \hat q (X,Y) }- (1-\alpha)}} = \\
    &\EE_{X,Y, \varepsilon}\qth{ \frac{\PP[Z \in G \mid \phi(X,Y)]}{\PP[Z\in G]} \pth{\11\sth{ \tS(X,Y,\varepsilon) \leq \hat q (X,Y) }- (1-\alpha)}} = \\
     &\EE_{\phi(X,Y), \varepsilon}\Big[ \frac{\PP[Z \in G \mid \phi(X,Y), \varepsilon]}{\PP[Z\in G \mid \varepsilon]} \cdot  
    \pth{\11\sth{ \tS(X,Y,\varepsilon) \leq \hat q (X,Y) }- (1-\alpha)} \Big] = \\
    &\EE_{\phi(X,Y), \varepsilon}\qth{  \11\sth{ \tS(X,Y,\varepsilon) \leq \hat q (X,Y) }- (1-\alpha)
    \Bigm| Z \in G} = \\
    & \EE \qth{  \11\sth{ \tS(X_{n+1},Y_{n+1},\varepsilon_{n+1}) \leq \hat q (X_{n+1},Y_{n+1}) }- (1-\alpha)
    \Bigm| Z_{n+1} \in G, \{X_i,Y_i,\varepsilon_i\}_{i \in [n]}} = \\
    & \PP\qth{Y_{n+1} \in\calC(X_{n+1};\varepsilon_{n+1}) \mid Z_{n+1}\in G, \{X_i,Y_i,\varepsilon_i\}_{i \in [n]} } -(1-\alpha).
\end{align*}
Combining the two equations above, the proof is complete.
\end{proof}
\begin{customcor}{\ref*{cor: group-cond}}[Restated]
     Let parameters $\alpha,\delta \in (0,1)$, and $\calW = \{\sum_{G \in \calG} \beta_G \11\{(x,y) \in G\}: \beta_G \in \RR, \forall G \in \calG \}$. Assume that the data $\{(X_i,Y_i)\}_{i\in [n+1]}$ are drawn \mbox{i.i.d.} from a distribution $\calD$, $\{\varepsilon_i\}_{i\in [n+1]}$ are drawn \mbox{i.i.d.} from a distribution $\calD_{\text{rn}}$, independently from the dataset, and the distribution of $\tS\left(X,Y, \varepsilon\right) \mid \Phi(X,Y)$ is continuous. There exists an absolute constant $C$ such that, with probability at least $1-\delta$ over the randomness of the calibration dataset $\{(X_i,Y_i)\}_{i\in [n]}$ and the noise $\{\varepsilon_i\}_{i \in [n]}$, the (randomized) prediction set $\calC$, given by Algorithms \ref{alg:quantile_reg} and \ref{alg:pred_set} satisfies, for all $G \in \calG$,
    \begin{align*}
    &\left| \PP[Y_{n+1}\in \calC(X_{n+1};\varepsilon_{n+1}) \mid (X_{n+1},Y_{n+1})\in G, \{X_i,Y_i,\varepsilon_i\}_{i \in [n]}] -(1-\alpha)\right| \leq \\
    &{} 
    \frac{1}{\PP[(X_{n+1}, Y_{n+1}) \in G]}\pth{C \sqrt{\frac{|\calG|}{n}} + \frac{|\calG|}{n} +\max \{\alpha, 1-\alpha\}\sqrt{\frac{2\ln(4|\calG|/\delta)}{n}} }.
    \end{align*}
\end{customcor}
\begin{proof}
\Cref{cor: group-cond} follows directly from \Cref{cor: fract-cov}.
\end{proof}

\begin{customcor}{\ref*{cor: distr-shift}}[Restated]
     Let $\alpha$ and $\delta$ be parameters in $(0,1)$, and $\calW = \{\Phi(\cdot)^T \beta :\beta \in \RR^d\}$ denote a class of linear weight functions over a bounded basis $\Phi : \calX \times \calY \to \RR^d$. Assume that the data $\{(X_i,Y_i)\}_{i\in [n]}$ are drawn \mbox{i.i.d.} from a distribution $\calD$, $\{\varepsilon_i\}_{i\in [n+1]}$ are drawn \mbox{i.i.d.} from a distribution $\calD_{\text{rn}}$, independently from the dataset, and the distribution of $\tS\left(X,Y, \varepsilon\right) \mid \Phi(X,Y)$ is continuous. Then, there exists an absolute constant $C$ such that, for every distribution $\calD_{\text{T}}$ such that $\frac{d\PP_{\calD_{T}}}{d\PP_{\calD}} \in \calW$ and $ \left|\frac{d\PP_{\calD_{T}}}{d\PP_{\calD}}(x,y) \right|\leq B$ for any $x \in \calX$ and $y \in \calY$, with probability at least $1-\delta$ over the randomness of the calibration dataset $\{(X_i,Y_i)\}_{i\in [n]}$ and the noise $\{\varepsilon_i\}_{i \in [n]}$, the (randomized) prediction set $\calC$, given by Algorithms \ref{alg:quantile_reg} and \ref{alg:pred_set}, satisfies
    \begin{align*}
        &|\PP [Y_{n+1} \in \calC(X_{n+1};\varepsilon_{n+1}) \mid \{X_i,Y_i,\varepsilon_i\}_{i \in [n]}] - (1-\alpha)|\leq 
        B \pth{C \sqrt{\frac{d}{n}} + \frac{d}{n} +\max \{\alpha, 1-\alpha\}\sqrt{\frac{2\ln(4/\delta)}{n}} } ,
    \end{align*}
     where $(X_{n+1},Y_{n+1})$ are drawn independently from the distribution $\calD_{\text{T}}$.  
\end{customcor}
\begin{proof}
 This corollary follows from \Cref{lem:main} for any weight function $w \in \calW$ of the form $w(x,y) = \frac{d\PP_{\calD_{T}}}{d\PP_{\calD}}(x,y)$ with $\|w\|_{\infty} \leq B$.
\end{proof}
\subsection{Comparison with Previous Results}
\label{sec: comparison}
For weight functions defined only on the covariates, Kandinsky conformal prediction obtains the same type of guarantees studied in \citet{JNRR2023, GCC2023, ACDR24}. \citet{JNRR2023} and \citet{ACDR24} essentially implement the same quantile regression method as that described in Algorithms \ref{alg:quantile_reg} and \ref{alg:pred_set} for the corresponding class of weight functions and for deterministic scores. Typically, for covariate-based weight functions the distribution of $S(X,Y) \mid \Phi(X)$ is continuous and, hence, it is common to set $\tS(x,y,\varepsilon) = S(x,y)$. \citet{JNRR2023} focus on the group-conditional case, where the groups are subsets of $\calX$, and their analysis of high-probability coverage is less optimal compared to the result in \Cref{cor: group-cond}. Additionally, our analysis of the expected weighted coverage deviation in \Cref{cor: expected-cov} provides a tighter upper bound compared to \cite{ACDR24}. Lastly, \citet{GCC2023} implement a different quantile regression method that we discuss further in \Cref{sec:testtimeqr}.

\subsection{Computational Details}
\label{sec: computation}
In \Cref{alg:pred_set}, the prediction set $\calC$ is defined as the subset of all labels in $\calY$ where the score is below the value determined by the quantile function $\hat q$, that varies with $y$. This makes calculating the prediction set from the quantile function $\hat q$ complex for large finite sets of labels, and even more so for continuous label domains. This is a problem that can also be encountered in full conformal prediction. For several applications described in this section and for certain chosen score functions, there exist oracles that, given the quantile function $\hat q$ and the test point $\calX_{n+1}$, return the prediction set $\calC$.

\section{Proofs from  \Cref{sec:testtimeqr}}
In this section, we provide the proof of \Cref{thm:tt-cond}.

\label{sec:tt_app}
\begin{customthm}{\ref*{thm:tt-cond}}[Restated]
       Let $\alpha$ be a parameter in $(0,1)$, and let $\calW = \{\Phi(\cdot)^T \beta :\beta \in \RR^d\}$ denote a class of linear weight functions over a basis $\Phi : \calX \times \calY \to \RR^d$. Assume that the data $\{(X_i,Y_i)\}_{i\in [n+1]}$ are drawn \mbox{i.i.d.} from a distribution $\calD$, $\{\varepsilon_i\}_{i\in [n+1]}$ are drawn \mbox{i.i.d.} from a distribution $\calD_{\text{rn}}$, independently from the dataset, and the distribution of $\tS\left(X,Y, \varepsilon\right) \mid \Phi(X,Y)$ is continuous. Then, for any $w \in \calW$, the prediction set given by \Cref{alg:tt_quantile_reg} satisfies 
    \begin{align*}
       |\EE_{D,E} [ \wcovdev(\calC, \alpha,w)]| \leq \frac{d}{n+1} \EE_{D_{+}}\left[ \max_{i \in [n+1]}|w(X_i,Y_i)|\right],
    \end{align*}
    where $D$ is the calibration dataset $\{(X_i,Y_i)\}_{i\in[n]}$, $E$ is the corresponding noise $\{\varepsilon_i\}_{i \in [n]}$ and $D_{+}$ is the full dataset $\{(X_i,Y_i)\}_{i\in[n+1]}$.
\end{customthm}
\begin{proof}
   This proof follows the techniques developed in \cite{GCC2023}. For simplicity, $\tS\left(X_i,Y_i, \varepsilon_i\right)$ is denoted by $\tS_i$. 
    Let $\hat{q}_{Y_{n+1}}$ be the quantile function \Cref{alg:tt_quantile_reg} computes for the true label $Y_{n+1}$. For a fixed $w \in \calW$ our objective can be reformulated as
    \begin{align*}
        \EE_{D,E}[\wcovdev(\calC, \alpha, w)] &= \EE\left[ w(X_{n+1},Y_{n+1})\left( \11\left\{ Y_{n+1} \in \calC(X_{n+1};\varepsilon_{n+1})\right\}-(1-\alpha)\right)\right]\\
        & =\EE \left[ w(X_{n+1},Y_{n+1})\left( \alpha - \11\left\{\tS_{n+1} > \hat{q}_{Y_{n+1}}(X_{n+1},Y_{n+1})\right\}\right)\right],
    \end{align*}
    where the expectations on the right hand side are taken over the randomness of $\{(X_i,Y_i)\}_{i\in [n+1]}$ and $\{\varepsilon_i\}_{i\in [n+1]}$.
    
    Since the data $\{(X_i,Y_i)\}_{i \in [n+1]}$ are \mbox{i.i.d.}, the random noise components $\{\varepsilon_i\}_{i \in [n+1]}$ are also \mbox{i.i.d.}, and based on \Cref{alg:tt_quantile_reg}, $\hat{q}_{Y_{n+1}}$ is invariant to permutations of $\{(X_i,Y_i)\}_{i \in [n+1]}$, the triples in \[\{(w(X_i,Y_i), \hat{q}_{Y_{n+}1}(X_i,Y_i), \tS_i,\varepsilon_i))\}_{i \in [n+1]}\] are exchangeable. Hence, we have that
    \begin{align*}
     \EE_{D,E}[\wcovdev(\calC, \alpha, w)] &=\EE\left[ w(X_{n+1},Y_{n+1})\left( \alpha - \11\left\{\tS_{n+1}> \hat{q}_{Y_{n+1}}(X_{n+1},Y_{n+1})\right\}\right)\right] \\
     &=\EE \left[\frac{1}{n+1}\sum_{i \in [n+1]} w(X_i,Y_i)\left( \alpha - \11\left\{\tS_i > \hat{q}_{Y_{n+1}}(X_i,Y_i)\right\}\right)\right].
    \end{align*}
    
    Since $\hat{q}_{Y_{n+1}}$ is a minimizer of the convex optimization problem defined in \Cref{alg:tt_quantile_reg}, we have that for a fixed $w \in \calW$, fixed datapoints $\{(X_i, Y_i)\}_{i \in [n+1]}$, and fixed noise $\{\varepsilon_i\}_{i \in [n+1]}$
    \begin{align*}
         0 \in \partial_{\eta} \left( \frac{1}{n+1} \sum_{i \in [n+1]} \ell_\alpha(\hat{q}_{Y_{n+1}}(X_i,Y_i) + \eta w(X_i,Y_i), \tS_i\right) \Bigg|_{\eta = 0}.
    \end{align*}
    Computing this subrgradient, we obtain that 
    \begin{align*}
        &\partial_{\eta} \left( \frac{1}{n+1} \sum_{i \in [n+1]} \ell_\alpha(\hat{q}_{Y_{n+1}}(X_i,Y_i) + \eta w(X_i,Y_i), \tS_i\right) =\\
        & \Bigg\{\frac{1}{n+1} \Big( \sum_{i \in [n+1]} w(X_i,Y_i) \left(\alpha - \11\left\{\tS_i > \hat{q}_{Y_{n+1}}(X_i,Y_i)\right\}\right)\11\{\tS_i \neq \hat{q}_{Y_{n+1}}(X_i, Y_i)\}+ \\
        &\sum_{i \in [n+1]} v_i w(X_i,Y_i)\11\{\tS_i = \hat{q}_{Y_{n+1}}(X_i,Y_i)\}\Big)\Bigg| v_i \in [\alpha-1,\alpha]\Bigg\}.
    \end{align*}
    Let $v^*_i$ be one of the values in $ [\alpha-1, \alpha]$ that set the subgradient to zero. Then, we have that 
    \begin{align*}
        &\frac{1}{n+1}\sum_{i \in [n+1]}w(X_i,Y_i) \left(\alpha - \11\{ \tS_i > \hat{q}_{Y_{n+1}}(X_i,Y_i)\}\right) = \\
        &\frac{1}{n+1} \sum_{i \in [n]} (\alpha -v^*_i)w(X_i,Y_i)\11\{\tS_i = \hat{q}_{Y_{n+1}}(X_i,Y_i)\}.
    \end{align*}

    Going back to our previous computation where we have only fixed $w \in \calW$, we apply the equality above to obtain that 
    \begin{align*}
         \EE_{D,E}[\wcovdev(\calC, \alpha,w)] &= 
          \EE \left[\frac{1}{n+1}\sum_{i \in [n+1]} w(X_i,Y_i)\left( \alpha - \11\left\{\tS_i > \hat{q}_{Y_{n+1}}(X_i,Y_i)\right\}\right)\right] \\
          &= \EE\left[ \frac{1}{n+1} \sum_{i \in [n+1] } (\alpha -v^*_i)w(X_i,Y_i) \11\{\tS_i = \hat{q}_{Y_{n+1}}(X_i,Y_i)\}\right].
    \end{align*}
    
    
    Now, we want to provide an upper bound for our objective. Recall that from the theorem formulation $D_{+} = \{(X_i,Y_i)\}_{i\in [n+1]}$ and let $E_{+} = \{\varepsilon_i\}_{i \in [n+1]}$. Since $v_i^* \geq \alpha - 1$, we get that 
    \begin{align*}
    |\EE_{D,E}[\wcovdev(\calC, \alpha,w)]| &=
     \left|\EE \left[ \frac{1}{n+1} \sum_{i \in [n+1] } (\alpha -v^*_i)w(X_i,Y_i) \11\{\tS_i = \hat{q}_{Y_{n+1}}(X_i,Y_i)\}\right]\right| \\
     &\leq \EE \left[ \frac{1}{n+1} \sum_{i \in [n+1] }|w(X_i,Y_i)| \11\{\tS_i = \hat{q}_{Y_{n+1}}(X_i,Y_i)\}\right] \\
     &\leq \EE \left[ \left(\max_{j \in [n+1]}|w(X_j,Y_j)|\right)\frac{1}{n+1} \sum_{i \in [n+1] } \11\{\tS_i = \hat{q}_{Y_{n+1}}(X_i,Y_i)\}\right]\\
     &=\EE_{D_{+}}\left[\EE_{E_{+}} \left[ \left(\max_{j \in [n+1]}|w(X_j,Y_j)|\right)\frac{1}{n+1} \sum_{i \in [n+1] } \11\{\tS_i = \hat{q}_{Y_{n+1}}(X_i,Y_i)\}\Bigg|\{\Phi(X_i, Y_i)\}_{i \in [n+1]}\right] \right]\\
     &=\EE_{D_{+}}\left[\left(\max_{j \in [n+1]}|w(X_j,Y_j)|\right)\frac{1}{n+1}\EE_{E_{+}} \left[  \sum_{i \in [n+1] } \11\{\tS_i = \hat{q}_{Y_{n+1}}(X_i,Y_i)\}\Bigg|\{\Phi(X_i, Y_i)\}_{i \in [n+1]}\right] \right]
    \end{align*}
    
     We will now bound the inner expectation of the above expression by showing that conditioning on $\{\Phi(X_i,Y_i)\}_{i \in [n+1]}$
     \[
     \PP \left[ \sum_{i \in [n+1] } \11\{\tS_i = \hat{q}_{Y_{n+1}}(X_i,Y_i)\} > d \Bigg| \{\Phi(X_i,Y_i)\}_{i \in [n+1]}\right] =0.
     \] 
    In more detail, we can upper bound this probability as follows
    \begin{align*}
        &\PP \left[ \sum_{i \in [n+1] } \11\{\tS_i = \hat{q}_{Y_{n+1}}(X_i,Y_i)\} > d \Bigg| \{\Phi(X_i,Y_i)\}_{i \in [n+1]}\right] =\\
        & \PP \left[ \exists 1 \leq j_1 < \ldots < j_{d+1} \leq n+1 \text{ s.t. } \forall i \in [d+1], \tS_i = \hat{q}_{Y_{n+1}}(X_i,Y_i)  | \{\Phi(X_i,Y_i)\}_{i \in [n+1]} \right] \leq\\
        & \sum_{1 \leq j_1 < \ldots < j_{d+1} \leq n+1} \PP \left[\forall i \in [d+1], \tS_i = \hat{q}_{Y_{n+1}}(X_i,Y_i) | \{\Phi(X_i,Y_i)\}_{i \in [n+1]} \right]\leq\\
        & \sum_{1 \leq j_1 < \ldots < j_{d+1} \leq n+1} \PP \left[\exists w \in \calW  \text{ s.t. }\forall i \in [d+1], \tS_i = w(X_i,Y_i)  | \{\Phi(X_i,Y_i)\}_{i \in [n+1]} \right] =\\
        & \sum_{1 \leq j_1 < \ldots < j_{d+1} \leq n+1} \PP \left[\exists \beta \in \RR^d \text{ s.t. } \forall i \in [d+1], \tS_i= \sum_{k \in [d]}\beta_k\Phi_k(X_i,Y_i) \Bigg| \{\Phi(X_i,Y_i)\}_{i \in [n+1]}\right] =\\
        &  \sum_{1 \leq j_1 < \ldots < j_{d+1} \leq n+1} \PP [ (\tS_{j_1}, \ldots, \tS_{j_{d+1}}) \in \text{RowSpace}([\Phi(X_{j_1},Y_{j_1})|\ldots|\Phi(X_{j_{d+1}}, Y_{j_{d+1}})] )| \{\Phi(X_i,Y_i)\}_{i \in [n+1]}].
    \end{align*}
    We notice that $R\left(\left[\Phi(X_{j_1},Y_{j_1})|\ldots|\Phi(X_{j_{d+1}}, Y_{j_{d+1}})\right]\right)$ is a $d$-dimensional subspace of $\RR^{d+1}$. Since for fixed $\{\Phi(X_i,Y_i)\}_{i \in [n+1]}$ the scores $\tS_{j_1}, \ldots, \tS_{j_{d+1}}$ are independent and continuously distributed, we have that for all $1 \leq j_1 < \ldots < j_{d+1} \leq n+1$
   \[
   \PP \left[ \left(\tS_{j_1}, \ldots, \tS_{j_{d+1}}\right) \in R\left(\left[\Phi(X_{j_1},Y_{j_1})|\ldots|\Phi(X_{j_{d+1}}, Y_{j_{d+1}})\right] \right)| \{\Phi(X_i,Y_i)\}_{i \in [n+1]}\right] = 0.
   \]

     Combining the inequalities of the steps above, we have proven that for all $w \in \calW$
     \[
     |\EE_{D,E} \left[ \wcovdev(\calC, \alpha, w)\right] |\leq \frac{d}{n+1} \EE_{D_{+}}\left[ \max_{i \in [n+1]}|w(X_i,Y_i)|\right].
     \]
\end{proof}

\section{Additional Experimental Details}
We use Histogram-based Gradient Boosting Tree through the implementation of scikit-learn~\citep{Pedregos11}. Specifically, we use the HistGradientBoostingClassifier to train the basis weight functions for Kandinsky conformal prediction. We use HistGradientBoostingRegressor to train the base model for ACSIncome. We apply default hyperparameters suggested by scikit-learn except that we set max\_iter to 250.

\subsection{ACSIncome}
\label{subsec:app_acs}
We preprocess the dataset following \citet{LW0N23}. We additionally apply logarithmic transformation of labels with base 10 and scale the label to $[0,1]$ by min-max scaling.

We train the base Gradient Boosting Tree regressor on 31,000 samples with 10,000 from each state. The calibration set contains 4,000 samples per state and the test set contains 2,000 samples per state. Given a number of groups $|\calG|$, we select samples from states with the $|\calG|$ smallest indices. We train the basis of Kandinsky's weight function class on the training set with selected states, but the base predictor is not retrained. We also filter the calibration and test set for selected states. We repeat the experiments 100 times by reshuffling the calibration and test set, but the training set is fixed such that Kandinsky's weight function class is also fixed for a given number of selected states.

We take the deterministic score function of Conformalized Quantile Regression (CQR). Given base predictors $f_{\alpha/2}(x)$ and $f_{1-\alpha/2}(x)$ for the $\alpha/2$ and $1-\alpha/2$ quantile, respectively,
\begin{align*}
    S(x,y) = \max \sth{ y-f_{1-\alpha/2}(x),  f_{\alpha/2}(x) - y}, \quad \tS = S. 
\end{align*}

\subsection{CivilComments}
\label{subsec:app_civil}
Following \citet{Koh21}, we split the dataset into 269,038 training samples and 178,962 samples for calibration and test. We finetune a DistilBERT-base-uncased model with a classification head on the training set, following the configurations of \citet{Koh21}. We randomly redistribute samples between the calibration and test sets to vary the calibration sample sizes. Since the groups are overlapping, we estimate the ratio between the group average sample size and the overall sample number of 178,962. Then we downsample the dataset accordingly to approximately reach a prescribed group average sample size in the calibration set. We repeat the redistribution procedure 100 times, but the training set is fixed such that the DistilBert model is trained only once. However, we train the weight functions of Kandinsky conformal prediction on the calibration set, since we are considering the setup where training samples are only accessible to algorithms by the base predictor. Therefore, the weight function class is retrained for each calibration set. Since group sample sizes can be different between runs, in \Cref{fig:main_e,fig:main_f}, annotated group sample sizes are estimated by the mean of actual group sample sizes over 100 runs.  

We take the randomized score function of Adaptive Prediction Sets (APS). Given the base classifier $f(x)$ that outputs the probability vector for all classes, we sort their probabilities in decreasing order.
\begin{align*}
    f_{(1)}(x) \geq f_{(2)}(x) \geq ... \geq f_{|\calY|}(x).
\end{align*}
We use $f_y(x)$ to represent the component of $f(x)$ for the class $y$ and $k_x(y)$ to represent the order of $f_y(x)$ such that $f_{(k_x(y))}(x)=f_y(x)$. The non-conformity score is given by
\begin{align*}
    \tS(x,y, \varepsilon) = \sum_{k=1}^{k_x(y)-1} f_{(k)}(x) + \varepsilon f_y(x), \quad \varepsilon\sim \text{Uniform}[0,1].
\end{align*}

\subsection{MCQA}
\label{subsec:app_mcqa}
\citet{BKumar23} takes the subset of MMLU benchmark containing 16 subjects spanning various fields and levels of study: professional accounting,
computer security,
high school computer science,
college computer science,
machine learning,
formal logic,
high school biology,
anatomy,
clinical knowledge,
college medicine,
professional medicine,
college chemistry,
marketing,
public relations,
management,
and business ethics.
They also evaluate the accuracy of the LLaMA-13B model on these subjects across ten different prompts (Figure 2, \citet{BKumar23}). We further group the subjects into the basic and advanced category according to the medium accuracy of the model across all prompts. Nine subjects are classified as basic with medium accuracy over 40\%: computer security, high school computer science, high school biology, anatomy, clinical knowledge, marketing, public relations, management, and business ethics. The remaining sevens subjects are classified as advanced: college computer science, machine learning, formal logic, college medicine, professional medicine, college chemistry, and professional accounting.

\end{document}